\newcommand{\addresseshere}{%
  \enddoc@text\let\enddoc@text\relax
}
\newtheorem{thm}{Theorem}[section]
\newtheorem{proposition}[thm]{Proposition}
\newtheorem{lemma}[thm]{Lemma}
\theoremstyle{remark}
\newtheorem{rem}[thm]{Remark}
\theoremstyle{definition}
\newtheorem{definition}[thm]{Definition}
\newtheorem{assumption}[thm]{Assumption}
\newcommand{\N}{\mathbb{N}}
\newcommand{\R}{\mathbb{R}}
\newcommand{\E}{\mathbb{E}}
\renewcommand{\epsilon}{\varepsilon} 
\newcommand{\prox}{\ensuremath{\mathrm{prox}}} 
\newcommand{\sign}{\ensuremath{\mathrm{sign}}} 
\newcommand{\var}{\ensuremath{\mathrm{Var}}}
\newcommand{\cov}{\ensuremath{\mathrm{Cov}}}
\newcommand{\softmax}{\ensuremath{\mathrm{softmax}}}
\DeclareMathOperator*{\argmin}{arg\,min}
\def\<{\langle}
\def\>{\rangle}
\title[Learning a Gaussian Mixture for Sparse Optimization]{Learning a Gaussian Mixture \\ for Sparsity Regularization in Inverse Problems}
\author[G.\ S.\ Alberti]{Giovanni S.\ Alberti \!$^1$} \email{giovanni.alberti@unige.it}
\author[L.\ Ratti]{Luca Ratti \!$^2$} \email{luca.ratti5@unibo.it}
\author[M.\ Santacesaria]{Matteo Santacesaria \!$^1$} \email{matteo.santacesaria@unige.it}
\author[S.\ Sciutto]{Silvia Sciutto \!$^1$} \email{silvia.sciutto@edu.unige.it}
\address{$^1$ MaLGa Center, Department of Mathematics, University of Genoa, Via Dodecaneso 35, 16146 Genova, Italy.}
\address{$^2$ Department of Mathematics, University of Bologna, Piazza di Porta San Donato 5, 40126 Bologna, Italy.}
\keywords{Sparse optimization, statistical learning, inverse problems, Gaussian mixture, dictionary learning, Bayes estimator}
\begin{document}

\begin{abstract}
In inverse problems, it is widely recognized that the incorporation of a sparsity prior yields a regularization effect on the solution. This approach is grounded on the a priori assumption that the unknown can be appropriately represented in a basis with a limited number of significant components, while most coefficients are close to zero. This occurrence is frequently observed in real-world scenarios, such as with piecewise smooth signals.

In this study, we propose a probabilistic sparsity prior formulated as a mixture of degenerate Gaussians, capable of modeling sparsity with respect to a generic basis. Under this premise, we design a neural network that can be interpreted as the Bayes estimator for linear inverse problems. Additionally, we put forth both a supervised and an unsupervised training strategy to estimate the parameters of this network.

To evaluate the effectiveness of our approach, we conduct a numerical comparison with commonly employed sparsity-promoting regularization techniques, namely LASSO, group LASSO, iterative hard thresholding, and sparse coding/dictionary learning. Notably, our reconstructions consistently exhibit lower mean square error values across all $1$D datasets utilized for the comparisons, even in cases where the datasets significantly deviate from a Gaussian mixture model.
    
\end{abstract}

\maketitle

\section{Introduction}

A signal is said to be \textit{sparse} if it can be represented as a linear combination of a small number of vectors in a known family (e.g., a basis or a frame). Sparsity has played a major role in the last decades in signal processing and statistics as a way to identify key quantities and find low dimensional representations of many families of signals, including natural images \cite{candes2006robust,donoho2006compressed,mallat2008wavelet}. In inverse problems, sparse optimization has had a significant impact in many applications, notably in accelerated magnetic resonance imaging via compressed sensing \cite{lustig2008compressed}.

The key ingredient needed in sparse optimization is the knowledge of a suitable dictionary that can well represent the unknown quantities with as few elements as possible. If these quantities cannot be described analytically and are measured from experiments, given a sufficient number of samples, machine learning techniques can be used to infer the dictionary {\cite{lee2006efficient,horesh-haber-2009,huang-haber-horesh-2012}}. Dictionary learning in the context of sparse optimization, i.e. sparse coding, can be seen as the problem of finding the best change of basis that makes the data as sparse as possible. Thus, it is a special case of the problem of learning a regularization functional \cite{lunz2018adversarial,schwab2019deep,alberti2021learning} and the more general framework of learning operators between function spaces (see {\cite{de2023convergence,kovachki2023neural}} and references therein).

In this work, we propose a new approach for dictionary learning for sparse optimization motivated by the problem of learning a regularization functional for solving inverse problems. We consider a linear inverse problem in a finite-dimensional setting:
\begin{equation}
\label{IP}
y = A x + \epsilon,   
\end{equation}
where $x \in \R^n$, $\epsilon \in \R^m$, $A \in \R^{m \times n}.$ 
We assume a statistical perspective: namely, $x$ and $\epsilon$ are realizations of the random variables $X$ and $E$ on $\R^n$ and $\R^m$, respectively. Let $\rho_X$ and $\rho_E$ denote their probability distributions.

Given some partial knowledge about the probability distributions of $X$ and $E$, we aim to recover an estimator $R\colon\R^m\to\R^n$ that has statistical guarantees and encodes information on the sparsity of $X$. While sparsity is well established in a deterministic setting, there is no clear consensus on how to define a probability distribution of sparse vectors. One option comes from the Bayesian interpretation of $\ell^1$ minimization: it can be seen as a maximum a posteriori (MAP) estimator under a Laplacian prior in the presence of additive white Gaussian noise. However, this is unsatisfactory if the aim is to generate truly sparse signals. Recently, it has been shown that specific hierarchical Bayesian models can be used as priors for sparse signals \cite{2019-calvetti-etal,Calvetti2023}.

Instead, in this work we consider a mixture of degenerate Gaussians as a statistical model for a distribution of sparse vectors $X$. The dimensions of the degenerate support of each Gaussian naturally represent the sparsity levels of the signals. We also assume that the noise is Gaussian.

To find the estimator $R$, we consider a statistical learning framework. We employ the mean squared error (MSE, also referred to as expected risk), namely:
\begin{equation}
\label{expected_risk}
L(R) =\E_{X \sim \rho_X,E\sim \rho_E}\| R(AX+E)-X \|_{\R^m}^2 = \E_{(X,Y) \sim \rho}\| R(Y)-X \|_{\R^m}^2,    
\end{equation}
where $\rho$ is the joint probability distribution of $(X,Y)$ on $\R^n \times \R^m$. By minimizing $L$ among all possible measurable functions $R$, we get the so-called Minimum MSE (MMSE) estimator, or Bayes estimator.

The Bayes estimator cannot be directly employed as a solution to the statistical inverse problem because it requires the knowledge of the full probability distribution $\rho$. Supervised learning techniques can be used to find a good approximation when a finite amount of data is available. However, a suitable class of functions (hypothesis space) must be chosen to obtain a good approximation.

Our main contribution is the development of supervised and unsupervised learning schemes to approximate the Bayes estimator {when a (degenerate) Gaussian Mixture model is assumed as a prior on $X$}. This is done thanks to an explicit expression for the MMSE, which is usually not available for general probability distributions underlying the data and the noise. The proposed training schemes are based on the observation that the explicit expression of the MMSE can be seen as a two-layer neural network, which can be easily trained using stochastic optimization and back-propagation. In particular, we find a strong similarity of our network with a single self-attention layer, a building block of the well-known transformer architecture \cite{vaswani2017attention}: our results provide a novel statistical interpretation of the attention mechanism.

{As carefully presented in Section \ref{sec:our_alg}, the supervised technique performs an empirical risk minimization within the hypothesis class $\mathcal{H}$ of Bayes estimators of Gaussian Mixture models, the means and the covariance matrices of the mixture being learnable parameters. The unsupervised one, instead, approximates the Bayes estimator relying on suitable clustering techniques and empirical estimates for the means and covariances.}

After the training, our network/algorithm can be used as an alternative to classical sparse optimization approaches. We perform extensive numerical comparisons of our approach with well-known sparsity-promoting algorithms: LASSO, group LASSO, iterative hard thresholding (IHT), and sparse coding/dictionary learning. The tests are done by performing denoising and deblurring on several {datasets of (discretized) 1D signals}. In all experiments, our method outperforms the others, in the MSE. Our method can be seen as a new paradigm for both sparse optimization and dictionary learning for sparse data.

The main limitation of our strategy is represented by the number of parameters that need to be estimated to identify the optimal regularizer. It is easy to show that this number grows as $L n^2$, being $L$ the number of components of the Gaussian mixture model. If the unknown $X$ is simply assumed to be $s$-sparse, namely, that its support is contained in any possible $s-$dimensional coordinate subspace in $\R^n$, we should consider $L=\binom{n}{s}$, which is prohibitive in most applications. This is why our method is best suited for cases of \textit{structured} sparsity, in which the support of $X$ consists of a restricted number of subspaces, which is the case of group sparsity \cite{yuan2006model}, for instance.

The paper is organized as follows. In Section~\ref{sec:stat_learn_bayes_est}, we provide an overview of the concept of Bayes estimator and recall an explicit formula in the case of linear inverse problems with Gaussian mixture prior. Afterwards, Section~\ref{sec:network_bayes} is dedicated to the interpretation of this estimator as a neural network, highlighting its potential utility for sparse recovery. We introduce a supervised and an unsupervised 
{approach} for learning the neural network representation of the Bayes estimator in Section~\ref{sec:our_alg}. In Section~\ref{sec:comp}, we describe some baseline sparsity-promoting algorithms, against which our methods are evaluated. This is done in Section~\ref{sec:numerics}, where we conduct numerical comparisons on denoising and deblurring problems across various {datasets of 1D signals}.
Conclusions are drawn in Section~\ref{sec:conclusions}. Further contents are reported in the appendices, including the proof of the main theoretical result (Appendix \ref{proof_bayes_est}), implementation details about the baseline algorithms (Appendix \ref{app:algorithms}), and extended numerical comparisons (Appendix \ref{app:numerics}).

\section{Statistical learning and Bayes estimator for inverse problems}
\label{sec:stat_learn_bayes_est}

In this section, we introduce our approach and motivation, collecting some results that are already known about inverse problems, statistical estimation, and mixtures of Gaussian random variables. The main elements of novelty are reported in the next section.

We recall the inverse problem introduced in \eqref{IP}, which we can also interpret as a realization of the following equation involving the random variables $X$ on $\R^n$ and $E,Y$ on $\R^m$:
\begin{equation}
\label{statIP}
Y = AX + E.   
\end{equation}
Throughout the paper, We make the following assumptions on the random variables $X$ and $E$.
\begin{assumption}
We assume that:
\begin{itemize}
\item The distribution of the noise, $\rho_E$, is known and zero-mean, i.e.\ $\mathbb{E}[E] = \mathbf{0}$. The covariance $\Sigma_E$ is invertible.
    \item The random variables $X$ and $E$ are independent.
\end{itemize}
    \label{ass:0}
\end{assumption}

Our goal is to identify a function $R\colon \R^m \rightarrow \R^n$ (which we will refer to as a regularizer or an estimator) such that the reconstructions $R(Ax+\epsilon)$ are close to the corresponding $x$, when $x,\epsilon$ are sampled from $X,E$ and the squared error $\|R(Ax+\epsilon)-R(x)\|^2$ is used as a metric.

This problem is in general different from the deterministic formulation of inverse problems: the recovery of a single $x$ from $y=Ax+\epsilon$. The task we are considering also differs from the one of Bayesian inverse problems. In that scenario, indeed, the goal is not to retrieve a function $R$ (or a regularized solution $R(Y)$), but rather to determine a probability distribution, in particular the conditional distribution of $X|Y$, and to ensure that, as the noise $E$ converges to $\mathbf{0}$ in some suitable sense, this (posterior) distribution converges to the distribution of $X$.

Our perspective on the inverse problem is instead rather related to the statistical theory of estimation, or statistical inference. Indeed, based on some partial knowledge about the probability distributions of $X$ and $E$, we aim to recover an estimate $R(Y)$ of $X$ close to $X$ w.r.t.\ the MSE. In particular, we set our discussion at the intersection of two areas of statistical inference, namely the Minimum Mean Square Error estimation and parametric estimation.
 
The first field is determined by the choice of \eqref{expected_risk} as the metric according to which the random variables $R(Y)$ and $X$ are considered close to each other or not.
The minimizer of $L$ among all possible measurable functions $R$ is defined as the MMSE estimator, or Bayes estimator:
\begin{equation}
R^{\star} \in \argmin \{ L(R): \; R\colon \R^m \rightarrow \R^n, R \text{ measurable} \}.
\label{eq:Bayesian_est}    
\end{equation}
As it is easy to show (see \cite{cucker2002mathematical}), the solution to such a problem is the conditional mean of $X$ given $Y$, so that
\[
R^{\star}(y) = \E[X|Y = y] = \int_{\R^n} x \hspace{0.05cm} p(x|y) \hspace{0.05cm} dx.   
\]

The Bayes estimator $R^\star$ is not always the most straightforward choice. Indeed, its exact computation requires the knowledge of the joint probability distribution $\rho$, which depends on the distributions of the noise $E$ and of $X$. Notice that the distribution of $X$ is not a user-crafted prior but should encode the full statistical model of the ground truth $X$, which may not be fully accessible when solving the inverse problem.

To overcome this issue, one usually assumes to have access to a rather large set of training data, namely of pairs $(x_j,y_j)$ sampled from the joint distribution $\rho$. It is then possible to substitute the integrals appearing in the definition of $L$ and of $R^\star$ by means of Monte-Carlo quadrature rules. This approach is usually referred to as \textit{supervised statistical learning}.
In order to avoid overfitting and preserve stability, one typically restricts the choice of the possible estimators $R$ to a specific class of functions $\mathcal{H}$, which introduces an implicit bias on the estimator and possibly a regularizing effect. This leads us to the framework of parametric estimation.

In \cite{alberti2021learning}, for example, the hypothesis class $\mathcal{H}$ consists of a family of affine functions in $y$, solutions of suitably parametrized quadratic minimization problems. Minimizing over such a class (which corresponds to the task of learning the optimal generalized Tikhonov regularizer) is motivated by theoretical reasons: if the distributions of $X$ and $E$ are Gaussian, then $R^\star$ belongs to $\mathcal{H}$. However, in general the minimizer of $L$ within $\mathcal{H}$ would deviate from $R^\star$, and such a bias could be in some cases undesired.   On top of that, a significant outcome of the analysis in \cite{alberti2021learning} is a closed-form expression of the optimal regularizer (also in infinite dimension), which in particular can be computed if the mean and the covariance of $X$ are known. This paves the way to an \textit{unsupervised statistical learning} approach: namely, to approximate such an estimator one would not need a training set ${(x_j,y_j)}_j$ sampled from the probability distribution $\rho$, but only a set $\{x_j\}$ sampled from $\rho_X$. 

This raises the question if there exist other possible prior distributions $X$ that may lead to a simple choice of a hypothesis class $\mathcal{H}$ containing the corresponding Bayes estimators, possibly endowed with an unsupervised technique to approximate them. If we consider a variational approach, in which $\mathcal{H}$ is a set of solution maps of suitable minimization problems, \cite{gribonval2011should} showed that the conditional mean $R^\star$, i.e.\ the MMSE estimator, can always be represented as the minimizer of a functional $\Phi_{\text{MMSE}}$, related to the probability distribution $p_X$. Nevertheless, in most cases, such a functional is not convex, which makes it critical to define $\mathcal{H}$ and to solve a minimization problem in it. 

In this paper, without considering a variational point of view, we propose a strategy through which it is possible to associate a hypothesis class to (rather general) prior distributions of $X$. The most prominent outcome of this approach is the thorough treatment of the case in which $X$ is a mixture of Gaussian random variables, which may be employed as a model of sparsity.

Our approach is based on the idea of parametric estimation in statistical inference: the distribution of the exact solution $X$ is unknown, but belongs to a known class of distributions, parametrized by a suitable set of parameters: 
\[
\{\rho_X(\theta): \  \theta \in \Theta\},
\]
for some $\Theta \subseteq \R^p$.
For example, we can assume that $X$ is a Gaussian random variable of unknown mean $\mu$ and covariance $\Sigma$, the pair $(\mu,\Sigma)$ being the parameter $\theta$. Analogously, we can describe more complicated probability distributions, possibly employing a larger set of parameters.

In this context, it is 
{natural} to construct a hypothesis class $\mathcal{H}$ that contains the Bayes estimators $R^\star$ of every possible prior in the parametric class, namely,
\[
\mathcal H = \{ R_\theta=\mathbb{E}_{\rho(\theta)}[X|Y=\cdot]: \theta \in \Theta\},
\]
where $\rho(\theta)$ is the joint probability distribution for $(X,Y)$ when $X\sim \rho_X(\theta)$, $Y=AX+E$ and $E\sim\rho_E$. Recall that, by Assumption~\ref{ass:0}, the distribution of the noise $E$ is known, and so it is considered fixed. {However, the choice of this hypothesis class is non-standard, if compared to deep learning approaches exploiting the approximation power of neural networks. The two main motivations for its adoption are that the Bayes estimator has sharp statistical properties in terms of MSE, and moreover it is well-suited for both supervised and unsupervised learning approaches, as we explain below.}

Notice that this same strategy can also be employed for nonlinear inverse problems: we formulated it in this narrower context for the ease of notation, since the content of the next sections is only related to linear problems.

Despite this approach being rather general, it is useful only if it is possible to provide a closed-form expression of $R_\theta$, which might entail a learning approach for its approximation. This can be easily done when $X$ belongs to fairly simple classes, such as Gaussian random variables or, as we show in this section, mixtures of Gaussians. {Other possible examples entailing closed-form representation of the posterior distribution (thus, of the Bayes estimator) are represented by the families of conjugate priors associated with the specified likelihood, i.e., with the noise model: for example, in the case of a Poisson likelihood, if the prior has a Gamma distribution, then also the posterior does.}



\subsection{Bayes estimator for linear inverse problems with a Gaussian Mixture prior}
\label{subsec:Bayesian}

We first recall some basic definitions on mixtures of random variables. Then, in Theorem~\ref{thm:bayes_est} we show the formula of the Bayes estimator for linear inverse problems with a Gaussian mixture prior. 

\begin{definition}
\label{mix_var}
    A random variable $X$ in $\R^n$ is a \textit{mixture of random variables} if it can be written as
    \begin{equation*}
        X = \sum_{i=1}^L X_i \mathbbm{1}_{\{i\}}(I) ,
    \end{equation*}
    where $X_i$ are random variables in $\R^n$, $I$ is a random variable on $\{ 1,...,L \}$ independent of $X_i$ and $L \in \N^+$ is the number of elements in the mixture. The indicator function $\mathbbm{1}_{\{i\}}(I)$ is equal to $1$ when $I=i$ and $0$ otherwise: as a consequence, the role of the discrete random variable $I$ is selecting the random variable $X_i$, therefore $w_i := \mathbb{P}(I = i)$ are informally called the \textit{weights of the mixture}.
\end{definition}

\begin{definition}
\label{Gauss_mix_var}
    A random variable $X$ in $\R^n$ is a \textit{Gaussian mixture} if it is a mixture, as defined in Definition~\ref{mix_var}, and $X_i \sim \mathcal{N}(\mu_i,\Sigma_i)$ are Gaussian random variables in $\R^n$.
\end{definition}

\begin{thm}
\label{thm:bayes_est}
Let $X$ be a Gaussian mixture in $\R^n$, as in Definition~ \ref{Gauss_mix_var}, and let $E\sim \mathcal{N}(\mathbf{0},\Sigma_E)$ {be such that Assumption \ref{ass:0} is verified, i.e., $\Sigma_E$ is invertible and $E$ is independent of $X_i$ for every $i$, and of $I$}. 
Set $Y=A X+E$, as in  \eqref{statIP}. The corresponding Bayes estimator is
    \begin{equation}
    \label{Bayes_est_gauss_mix}
    R^{\star}(y) = \E[X|Y=y] = \sum_{i=1}^L \frac{c_i}{\sum_{j=1}^L c_j} (\mu_i + \Sigma_i A^T (A \Sigma_i A^T + \Sigma_E)^{-1} (y-A \mu_i)),  
    \end{equation}
    where
    \begin{equation}
    \label{c_i}
        c_i = \frac{w_i}{\sqrt{(2\pi)^n |A \Sigma_i A^T + \Sigma_E|}} \exp{ \Big( -\frac{1}{2} \| (A \Sigma_i A^T + \Sigma_E)^{-\frac{1}{2}} (y-A \mu_i) \|_2^2 \Big) },
    \end{equation}
    with the notation $|B|=\det B$.
\end{thm}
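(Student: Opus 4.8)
The plan is to compute the conditional mean $\E[X \mid Y = y]$ directly by first conditioning on the mixture selector $I$ and then marginalizing over it via the law of total expectation. First I would observe that, conditionally on $\{I = i\}$, the pair $(X, Y)$ is jointly Gaussian: indeed $X = X_i$ has law $\mathcal{N}(\mu_i, \Sigma_i)$, and $Y = A X_i + E$ with $E \sim \mathcal{N}(\mathbf{0}, \Sigma_E)$ independent of $X_i$, so $(X, Y) \mid \{I = i\}$ is Gaussian with mean $(\mu_i, A\mu_i)$ and a block covariance matrix whose blocks are $\Sigma_i$, $\Sigma_i A^T$, $A\Sigma_i$, and $A\Sigma_i A^T + \Sigma_E$. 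Invoking the standard formula for the conditional mean of one block of a Gaussian vector given another (which is well-defined precisely because $A\Sigma_i A^T + \Sigma_E$ is invertible by hypothesis), I get
\[
\E[X \mid Y = y,\, I = i] = \mu_i + \Sigma_i A^T (A\Sigma_i A^T + \Sigma_E)^{-1}(y - A\mu_i).
\]

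Next I would assemble the full estimator through
\[
\E[X \mid Y = y] = \sum_{i=1}^L \Prob(I = i \mid Y = y)\, \E[X \mid Y = y,\, I = i],
\]
so it remains to identify the posterior weights $\Prob(I = i \mid Y = y)$. By Bayes' rule these equal $w_i\, p_{Y \mid I = i}(y) \big/ \sum_{j=1}^L w_j\, p_{Y \mid I = j}(y)$, and since $Y \mid \{I = i\} \sim \mathcal{N}(A\mu_i,\, A\Sigma_i A^T + \Sigma_E)$, the density $p_{Y\mid I=i}(y)$ is exactly the Gaussian density that, multiplied by $w_i$, gives the quantity $c_i$ defined in \eqref{c_i} — note the normalization $(2\pi)^n$ there should be read as matching the dimension $m$ of $Y$, but in any case the $(2\pi)$ factors and everything independent of $i$ cancel in the ratio $c_i / \sum_j c_j$. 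Substituting these posterior weights into the total-expectation decomposition yields \eqref{Bayes_est_gauss_mix} directly.

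There is no single hard step here; the argument is a clean two-stage conditioning. The one place requiring mild care is making the conditioning on the discrete variable $I$ rigorous — i.e., justifying that the joint density of $(X, Y)$ is the $w_i$-weighted sum of the Gaussian joint densities and that $I$ may be treated as a latent variable in Bayes' rule — which follows immediately from Definition~\ref{mix_var} and the independence assumptions (Assumption~\ref{ass:0} together with the independence of $E$ from $I$ and the $X_i$). A secondary point worth stating explicitly is why $\sum_j c_j > 0$, so that the expression is well-defined: each $c_j$ is strictly positive because $w_j \ge 0$ with at least one $w_j > 0$ and the Gaussian density is everywhere positive; one should also note $w_i \geq 0$ and $\sum_i w_i = 1$. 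I would also remark that the invertibility hypothesis on $A\Sigma_i A^T + \Sigma_E$ is exactly what is needed both for the conditional-mean formula and for $Y \mid \{I=i\}$ to possess a density, tying the two parts of the proof together.
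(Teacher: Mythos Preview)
Your proposal is correct and follows essentially the same approach as the paper: condition on the mixture index, apply the Gaussian conditional-mean formula to obtain $\E[X\mid Y=y,\,I=i]$, and then recombine via posterior weights $\Prob(I=i\mid Y=y)=w_i\,p_{Y\mid I=i}(y)/\sum_j w_j\,p_{Y\mid I=j}(y)$. The paper packages the same steps into two lemmas, working at the level of densities (showing $p_{X\mid Y}$ is a weighted mixture of $p_{X_i\mid Y_i}$) rather than invoking the law of total expectation on $I$, but the content is identical.
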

{
Note that the forward map $A$ and the covariance of the noise $\Sigma_E$ are considered known and fixed, and we assume that $\Sigma_E$ is invertible, so that $A \Sigma_i A^T + \Sigma_E$ is invertible for every $i$.}
A proof of Theorem~\ref{thm:bayes_est} can be found in \cite{kundu2008gmm}. For completeness, we provide a similar proof in Appendix~\ref{proof_bayes_est}.

\section{A neural network for sparse recovery}
\label{sec:network_bayes}

In this section, we provide a novel interpretation both of the  resulting formula \eqref{Bayes_est_gauss_mix} and of the Gaussian mixture model itself, in view of an application to sparsity-promoting learned regularization. We start by showing that the expression derived in \eqref{Bayes_est_gauss_mix} can be understood as a neural network, whose architecture has strong connections with the well-known attention mechanism used in transformers \cite{vaswani2017attention}. Then, we describe how the Gaussian mixture model assumption can be used to encode a (group) sparsity prior on the unknown random variable $X$.

\subsection{Bayes estimator as a neural network}
\label{sec:bayes_as_NN}

We wish to interpret the expression of the Bayes estimator \eqref{Bayes_est_gauss_mix} for Gaussian mixture models as a neural network from $\R^m$ to $\R^n$. 

We start by identifying the parameters: in particular, we define 
\begin{equation}
\theta = \big( \{w_i\}_{i=1}^L, \{\mu_i\}_{i=1}^L, \{\Sigma_i\}_{i=1}^L\big),
    \label{eq:theta}
\end{equation}
collecting all the weights, means, and covariances of the mixture. We can now denote the function defined in \eqref{Bayes_est_gauss_mix} as $R_\theta$, parametrized according to \eqref{eq:theta}. 
The resulting hypothesis class is 
\begin{equation}
\mathcal{H} = \{R_\theta: \ \theta \in \Theta\}; \quad \Theta \subseteq [0,1]^{L} \times (\R^n)^L \times (\R^{n\times n})^L.
    \label{eq:hypothesis}
\end{equation}
In order for $R_{\theta}$ to be well defined and for the parameters to have a precise statistical interpretation, we impose that 
\begin{equation*}
   \Theta = \{ \theta \text{ defined in \eqref{eq:theta}}: \sum_{i=1}^L w_i = 1 \text{ and } \Sigma_i \succcurlyeq 0 \text{ symmetric for } i=1,...,L\},
\end{equation*}
where $\Sigma_i \succcurlyeq 0$ denotes that $\Sigma_i$ is  positive semidefinite.
Notice that the set $\mathcal{H}$ is the collection of the Bayes estimators of all possible Gaussian mixture models of $L$ components in $\R^n$.

We now focus on describing formula \eqref{Bayes_est_gauss_mix} in terms of a neural network's architecture  (see Figure~\ref{fig:bayes_as_NN}). {We have the following equivalent formula for the Bayes estimator.}
\begin{proposition} \label{prop:nnbayes}
{We have that}
\begin{equation}\label{eq:nnbayes}
R_\theta(y)=\sum_{i=1}^L W_i t_i,
\end{equation}
where
\begin{equation*}
    W_i = \softmax(z)_i := \frac{e^{z_i}}{\displaystyle \sum_{j=1}^L e^{z_j}},
\end{equation*}
with
\begin{equation}
\label{z_i}
    z_i = f_i(y) := \log{ \bigg( \frac{w_i}{\sqrt{(2\pi)^n |A \Sigma_i A^T + \Sigma_E|}} \bigg) } -\frac{1}{2} \| (A \Sigma_i A^T + \Sigma_E)^{-\frac{1}{2}} (y-A \mu_i) \|_2^2,
\end{equation}
and 
\begin{equation}
\label{t_i}
    t_i = g_i(y) := \mu_i + \Sigma_i A^T (A \Sigma_i A^T+ \Sigma_E)^{-1} (y-A \mu_i).
\end{equation}    
\end{proposition}

{The proof is a straightforward computation.}

Note that $c_i$ in \eqref{c_i} is $e^{f_i(y)}$ and that $g_i \colon \R^m \to \R^n$ is an affine map in $y$, whereas $f_i \colon \R^m \to \R$ is a \textit{generalized quadratic function} \cite{mantini2021cqnn}, namely
\[
f_i(y) = \gamma_i + b_i^T y + y^T A_i y
\]
for some $\gamma_i \in \R$, $b_i \in \R^m$, $A_i \in \R^{m \times m}$. Second-order functions have already been used as neural network layers  \cite{giles1987learning}.

\begin{figure}
\centering 
    \includegraphics[scale=0.7]{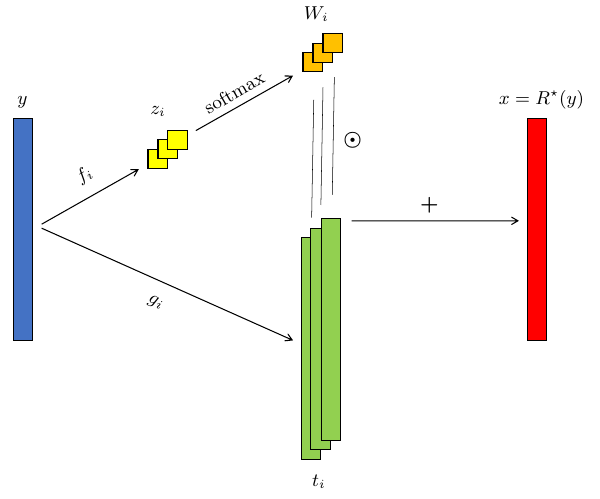}
    \caption{Architecture of the neural network representing the Bayes estimator for denoising with $L=3$.}
\label{fig:bayes_as_NN}
\end{figure}

Adopting the vocabulary of machine learning practitioners, $R_\theta$ is a two-layer feed-forward neural network, with a single hidden layer that involves non-standard operations on the input variables. Such a layer has some connections with an attention mechanism with $L$ channels, as discussed in the next paragraph.

\subsubsection*{Connections with the attention mechanism.}
{The transformer architecture, which uses the attention mechanism \cite{vaswani2017attention}, has greatly improved machine learning, especially in natural language processing (NLP). This mechanism helps models focus on the most important parts of the input, boosting performance in various tasks, including NLP and computer vision. Instead of treating all input elements the same, the attention mechanism gives different weights to different parts, allowing models to allocate resources better and enhance both interpretability and performance.}



More precisely, we consider the self-attention mechanism, which consists of three main components: queries, keys, and values. The queries represent the elements that require attention, while the keys are compared to the queries to determine their relevance. The values correspond to the associated information or representations of the input elements. Attention scores are computed by measuring compatibility or similarity between the queries and the keys, often employing techniques like dot product, additive, or multiplicative attention. Softmax normalization is then used to obtain attention weights. Mathematically, an input $\eta \in \R^{L \times m}$ is linearly transformed into $Q = \eta U_Q$, $K = \eta U_K$ and $V = \eta U_V$, where $U_Q, U_K, U_V \in \R^{m \times m}$ are linear maps. The elements $Q,K,V \in \R^{L \times m}$ represent the $L$ queries, keys, and values of dimension $m$, respectively. The attention mechanism is then
\begin{equation}
\label{att_mech}
\operatorname{Att}(\eta) = \operatorname{Att}(Q,K,V) := \softmax \bigg( \frac{Q K^T}{\sqrt{L}} \bigg) V \in \R^{L \times m},
\end{equation}
where the $\softmax$ acts column-wise, namely
\begin{equation*}
    \softmax(M)_{i,j} = \frac{e^{M_{i,j}}}{\sum_{l=1}^{N_c} e^{M_{i,l}}}, \quad M \in \R^{N_r \times N_c}. 
\end{equation*} 

{In our case, we have the following result, as an immediate consequence of Proposition \ref{prop:nnbayes}.
\begin{proposition}
The Bayes estimator for a Gaussian mixture can be written as follows:
\begin{equation}
\label{att_mech_var}
    R^{\star}(y)  = \overline{\operatorname{Att}}(\eta) = \overline{\operatorname{Att}}(Q,K,V) := \softmax \bigg( (Q \odot K) \mathbf{1}_m^T \bigg)^T V \in \R^{n},
\end{equation}
where $\odot$ represents the Hadamard product (or element-wise product), the matrices $Q,K,V \in \R^{L\times m}$ have rows defined, for $i=1,...,L$, by
\begin{itemize}
    \item $Q_i = \frac{l_i^{\frac{1}{2}}}{\sqrt{m}}  \mathbf{1}_m + \frac{1}{\sqrt{2}} M_i \eta_i \in \R^m$,
    \item  $K_i = \frac{l_i^{\frac{1}{2}}}{\sqrt{m}}  \mathbf{1}_m - \frac{1}{\sqrt{2}} M_i \eta_i \in \R^m$,
    \item $V_i = \mu_i + \Sigma_i A^T M_i^2 \eta_i \in \R^n$,
\end{itemize}
where
\begin{itemize}
    \item $\eta_i = y - A \mu_i \in \R^m$,
    \item $l_i = \log{ \bigg( \frac{w_i}{\sqrt{(2\pi)^n |A \Sigma_i A^T + \Sigma_E|}} \bigg) }$,
    \item $M_i = (A \Sigma_i A^T + \Sigma_E)^{-\frac{1}{2}}$,
    \item $\mathbf{1}_m=(1,\dots,1)\in\R^m$.
\end{itemize}
\end{proposition}
}



Therefore, we have found the queries, the keys and the values to view our network as an alternative version of the attention mechanism. The difference from the classical attention mechanism is the fact that the three transformations for finding $Q,K,V$ are affine, and not linear, and that we use the element-wise product of matrices, instead of the matrix product, which leads to a different size of the network output.

\subsection{(Degenerate) Gaussian mixture as sparsity prior}

The Gaussian mixture prior can be viewed as a sparsity prior by taking $X_i$ in Definition~\ref{Gauss_mix_var} as a degenerate Gaussian. This means that $X_i$ is a Gaussian variable whose support is contained in an $s$-dimensional subspace of $\R^n$, with $s \ll n$ denoting the sparsity level. In other words, the covariance matrix $\Sigma_i$ is degenerate, with $\dim(\ker \Sigma_i)^\perp\le s$. Note that this setting, in which the covariances are not full rank, is compatible with the model considered so far, and with the corresponding estimator written as a neural network, because $\Sigma_E$ is invertible.

Let us briefly discuss why this setting corresponds to a sparsity prior. Take $\mu_i=\textbf{0}$. Write $\Sigma_i$ with respect to its eigenvectors $\{\varphi_k^i\}_k$ and eigenvalues $\{\sigma_k^i\}_k$:
\[
\Sigma_i = \sum_{k=1}^s \sigma_k^i \,\varphi_k^i \otimes \varphi_k^i.
\]
This allows us to expand $X_i$ as
\[
X_i = \sum_{k=1}^s a^i_k \varphi_k^i,
\]
where $a^i_k\sim \mathcal{N}(0,(\sigma_k^i)^2)$. Therefore, with probability $w_i$, we have $X=X_i$, and $X_i$ is a random linear combination of $\varphi^i_1,\dots,\varphi^i_s$, and so is an $s$-sparse vector.

In the case when the number of elements in the mixture, $L$, is equal to the number of all possible subsets of cardinality $s$ of $\{1,\dots,n\}$, $\binom{n}{s}$, and the eigenvectors $\varphi^i_k$ are all chosen from a fixed orthonormal basis $\mathcal{B}$ of $\R^n$, the mixture generates all vectors that are $s$-sparse with respect to $\mathcal{B}$. However, $\binom{n}{s}$ grows very fast in $s$ and $n$, and so this becomes unfeasible even with relatively small values of $s$ and $n$. Therefore, we are led to take $L\ll \binom{n}{s}$, which corresponds to selecting a priori a subset of all possible $s$-dimensional subspaces of $\R^n$, a setting that is commonly referred to as group sparsity \cite{yuan2006model}. On the other hand, we have additional flexibility in the choice of the eigenvectors $\varphi^i_k$, which need not be chosen from a fixed basis.

While Gaussian distributions work well to represent smoothness priors, they are not well-adapted to model sparsity. Here, we propose to use (degenerate) Gaussian mixture models to represent (group) sparsity prior. {In the context of imaging inverse problems, the use of Gaussian Mixture Models to describe structured sparsity has been leveraged in \cite{yu2011solving}, although focusing on maximum-a-posteriori rather than Bayes estimation.} An alternative approach using hierarchical Bayesian models is considered in \cite{2019-calvetti-etal,Calvetti2023}. We also note that our unsupervised approach (introduced below) shares a similar clustering step with the dictionary learning strategy presented in \cite{bocchinfuso2023bayesian} within a hierarchical model framework.

\section{Proposed algorithms: Supervised and Unsupervised approaches}
\label{sec:our_alg}

In this section, we propose two different {techniques} to learn the neural network representation \eqref{eq:nnbayes} of the Bayes estimator \eqref{Bayes_est_gauss_mix} to solve the statistical linear inverse problem of retrieving $X$ from $Y$ in \eqref{statIP}. We assume that the random variables $X$ and $E$ satisfy Assumption~\ref{ass:0}, and write $X$  as a mixture of $L$ random variables \[X = \sum_{i=1}^L X_i \mathbbm{1}_{ \{ i \} }(I),\] where the weights $w_{X_i} = \mathbb{P}(I=i)$, the means $\mu_{X_i} = \E[X_i]$ and the covariances $\Sigma_{X_i} = \operatorname{Cov}[X_i]$ are in general unknown.

We propose two possible regularizers of the form $R_\theta$ (cfr.\ \eqref{eq:theta} and \eqref{eq:hypothesis}), for two ideal choices of parameters $\theta$. The first one, which we will call \textit{supervised}, is
\begin{equation}
\theta^* \in \operatorname{argmin}\big\{L(R_\theta):  \theta \in \Theta, \| \theta \|_{\infty} \leq \varrho \big\},
    \label{eq:optimal_target}
\end{equation}
for some $\varrho >0$, being $L(R_\theta)$ defined as in \eqref{expected_risk} and $\| \theta \|_{\infty}$ the $\ell^{\infty}$ norm of the vectorized $\theta$. Notice that, since the minimization problem \eqref{eq:optimal_target} is restricted to a closed subset of a compact ball and $L(R_{\theta})$ is continuous in $\theta$, the minimum $\theta^*$ exists.

The second parameter choice, which corresponds to an \textit{unsupervised} approach, is instead simply 
\begin{equation}
    \theta_X = (\{w_{X_i}\}_{i=1}^L,\{\mu_{X_i}\}_{i=1}^L,\{\Sigma_{X_i}\}_{i=1}^L),
    \label{eq:thetaX}
\end{equation} 
where the involved quantities are defined above.

We immediately remark that both $\theta^*$ and $\theta_X$ depend on the probability distribution of $X$, which is in general unknown. In Section~\ref{sec:sup_alg}, we show how to approximate $\theta^*$ by taking advantage of a training set of the form $\{ (x_j, y_j) \}_{j=1}^N$ sampled from $(X,Y)$. This qualifies the strategy as a supervised learning algorithm. Instead, in Section~\ref{sec:unsup_alg}, we show how to approximate $\theta_X$ by means of a training set of the form $\{ x_j \}_{j=1}^N$, i.e., in an unsupervised way. 

We also observe that, if we further assume that $E$ is a Gaussian random variable and that $X$ is a mixture of Gaussian random variables,
then the outcomes of the two strategies coincide, provided that  $\|\theta_X\|_\infty\le\varrho$. Indeed, by Theorem~\ref{thm:bayes_est}, we know that $R_{\theta_X}$ is the Bayes estimator $R^*$, and so
\[
R^*=R_{\theta_X}=R_{\theta^*}.
\] 
Nevertheless, the trained networks $R_{\theta^*}$ and $R_{\theta_X}$ could also be good estimators even in contexts that slightly deviate from the setting of Gaussian random variables. In particular, we are interested in testing them in cases in which $X$ is group-sparse, but we are unsure if it is distributed as a Gaussian mixture model. If this is the case, the two parameters are in general distinct - and also different from $R^*$, thus they are not theoretically guaranteed to be good estimators - and the two approximation strategies might achieve different levels of performance.

Notice also that, despite we suppose to deal with the sparsity prior provided by the degenerate Gaussian mixture model, the proposed strategies hold for a general mixture.

\subsection{Supervised approach}
\label{sec:sup_alg}
Let $\{ (x_j, y_j) \}_{j=1}^N$ be a training set, where $x_j\sim X$ are i.i.d., $\epsilon_j\sim E$ are i.i.d.\ and $y_j=Ax_j+\varepsilon_j$. The parameter $\theta = (w_i,\mu_i,\Sigma_i)_{i=1}^L$ of the neural network defined in Section~\ref{sec:bayes_as_NN} can be learned by minimizing the empirical risk
\begin{equation}
    \label{eq:emp}
    \mathcal{L}(\theta) = \frac{1}{N} \sum_{j=1}^N \|x_j - R_{\theta}(y_j) \|_2^2,
\end{equation}
which is the squared MSE between the original signals, $x_j$, and the reconstructions provided by the neural network, $R_{\theta}(y_j)$. 
In order to enforce sparsity, it is possible to add a second term to the empirical risk, namely to consider 
\begin{equation*}
    \mathcal{L}_s(\theta) = \mathcal{L}(\theta) + \lambda \mathcal{J}(\theta),
\end{equation*}
where $\lambda$ is a regularization parameter,  $\mathcal{J}(\theta) = \sum_{i=1}^L \| \Sigma_i \|_*$, and $\| \cdot \|_*$ is the nuclear norm. 
Such a regularization term can be equivalently defined as the $\ell^1$ norm of the singular values of $\Sigma_i$, hence it promotes low-rank covariances without requiring the knowledge of their eigenvectors. 

The evaluation of the nuclear norm is computationally expensive and should be done $L$ times for each step of the minimization process. Another option is taking $\mathcal{J}(\theta) = \sum_{i=1}^L \| \Sigma_i \|^2_{\text{F}}$, where $\| \cdot \|_{\text{F}}$ represents the Frobenius norm. This choice is computationally convenient, but does not promote any kind of sparsity on the covariances. However, in the numerical simulations, we do not observe significant differences when using the nuclear norm, the Frobenius norm or no regularization term {(see in particular Table \ref{tab_3})}. Therefore, in the numerical results of Section~\ref{sec:numerics} we did not make use of any regularization term.


\subsection{Unsupervised approach}
\label{sec:unsup_alg}

Suppose to be in an unsupervised setting, i.e.\ to have a training set $\{ x_j\}_{j=1}^N$, where $x_j$ are sampled i.i.d.\ from the mixture $X = \sum_{i=1}^L X_i \mathbbm{1}_{ \{i \} }(I)$. We now wish to find an approximation for the means $\mu_{X_i}$, the covariances $\Sigma_{X_i}$ and the weights $w_{X_i}$ of the mixture.  We propose the following two-step procedure.

\begin{enumerate}
    \item \textbf{Subspace clustering}. The elements of the training set $\{ x_j\}_{j=1}^N$ sampled from different degenerate distributions $X_i$  belong to different subspaces of $\R^n$. We propose to cluster these points by using subspace clustering and, in particular, the technique provided in \cite{lu2012robust}. As a result, the training set is partitioned into $\widehat L$ subsets. Ideally, we should have $\widehat L=L$, and each subset should correspond to one Gaussian of the mixture.
    \item \textbf{The parameters}. We compute the empirical means $\{\widehat\mu_i\}_{i=1}^{\widehat L}$ and the empirical covariances $\{\widehat\Sigma_i\}_{i=1}^{\widehat L}$ of each cluster, and we estimate the weights $\{\widehat w_i\}_{i=1}^{\widehat L}$ of the mixture by using the number of elements in the clusters.
\end{enumerate}
Once this is done, we can use the neural network defined in Section~\ref{sec:bayes_as_NN} with these parameters. More precisely, this is the neural network $R_{\widehat \theta_X}$, where
\[
\widehat \theta_X = \big( \{\widehat w_i\}_{i=1}^{\widehat L}, \{\widehat\mu_i\}_{i=1}^{\widehat L}, \{\widehat\Sigma_i\}_{i=1}^{\widehat L}\big). 
\]

\section{Baseline algorithms}
\label{sec:comp}
In this section, we describe some of the most popular regularization techniques used to solve linear inverse problems with a sparsity prior. These include Least Absolute Shrinkage and Selection Operator (LASSO), Group LASSO, Iterative Hard Thresholding (IHT), and Dictionary Learning. We report here the formulation and general idea of such methods and postpone the discussion on the implementation aspects to Appendix \ref{app:algorithms}. In the numerical experiments of Section \ref{sec:numerics}, we will compare our two proposed approaches against the methods presented in this section.

\subsection{LASSO}
\label{sec:LASSO}

Least Absolute Shrinkage and Selection Operator (LASSO) \cite{tibshirani1996regression}, is one of the most acknowledged techniques for sparsity promotion, also in the context of inverse problems. It involves the minimization of a functional composed by the sum of a data fidelity term and a regularization term that promotes sparsity with respect to a given basis. In particular, the regularization term is the $\ell^1$ norm of the 
components of the unknown in the sparsifying basis.

More precisely, we suppose that the unknown $x$ in \eqref{IP} is sparse with respect to an orthonormal basis $\mathcal{B}$, and we denote by $M \in \R^{n \times n}$ the (orthogonal) matrix representing the change of basis from $\mathcal{B}$ to the canonical one, also known as the synthesis operator. Then, the functional to be minimized is
\begin{equation}
\label{LASSO_func}
\mathcal{F}(\beta) = \frac{1}{2} \| y - A M \beta \|_2^2 + \lambda \| \beta \|_1,
\end{equation}
where $\lambda > 0$ is a regularization parameter. The minimization of $\mathcal{F}(\beta)$ over $\R^n$, also known as synthesis formulation of LASSO, is discussed in Appendix~\ref{app:algorithms}.
The main drawbacks of this method are the choice of the regularization parameter $\lambda$, and the required prior knowledge of the synthesis operator $M$ (i.e., of the basis $\mathcal{B}$ with respect to which the unknown is sparse). 
In Section \ref{sec:dict_learn}, we show how to combine dictionary learning techniques to fill this gap. Other alternatives, explored in the numerical experiments in Section \ref{sec:numerics}, are to leverage prior knowledge on $M$, or to infer it via the singular value decomposition (SVD).

\subsection{Group LASSO}
\label{sec:Glasso}
An extension of LASSO is Group LASSO \cite{friedman2010note}, which encodes the idea that the features of the possible unknowns can be organized into groups. This is achieved by 
introducing $L$ different coordinate systems, each represented by an orthogonal synthesis matrix $M_i \in \R^{n \times n}$, and by minimizing the following functional:
\begin{equation}
\label{GLASSO_func_gen}
\mathcal{F}(\beta) = \frac{1}{2} \bigg\| y - A \sum_{i=1}^L M_i \beta_i \bigg\|_2^2 + \lambda \sum_{i=1}^L  \| \beta_i \|_{K_i},
\end{equation}
where $\beta_i$ is the coefficient vector corresponding to the $i$-th group, $\beta \in \R^{nL}$ is the concatenation of the vectors $\beta_1,\ldots,\beta_L$, and $\lambda > 0$ is a regularization parameter. 
The second term of \eqref{GLASSO_func_gen} is a regularization functional encouraging entire groups of features to be either included or excluded from the model. The weighted norm $\| \beta_i \|_2$ with $\| \beta_i \|_{K_i}:= (\beta_i^T K_i \beta_i)^{\frac{1}{2}}$ is employed to promote prior information regarding the sparse representation of each group.
The minimization of \eqref{GLASSO_func_gen} can be performed via the iterative algorithm proposed in \cite[Proposition $1$]{yuan2006model}, as discussed in Appendix \ref{app:algorithms}.
As for LASSO, also for Group LASSO one has to choose the regularization parameter $\lambda$, and to know a priori the group bases $\mathcal{B}_i$ for $i=1,...,L$.

\subsection{IHT}
\label{sec:IHT}

The Iterative Hard Thresholding algorithm, as presented in \cite{blumensath2013compressed}, can be employed as a reconstruction method for inverse problems in which the unknown belongs to the union of $L$ subspaces. This can be seen as a model of sparsity. Once an orthogonal synthesis matrix $M \in \R^{n \times n}$ is introduced, the coefficients $\beta$ such that $x = M\beta$ are supposed to satisfy $\beta \in \mathcal{S} = \cup_{i=1}^L \mathcal{S}_i$, where each $\mathcal{S}_i$ is a coordinate subspace in $\R^n$ of dimension $s$ or smaller, i.e., the span of a subset of cardinality at most $s$ of the canonical basis. IHT then involves the minimization of
\begin{equation}
\label{IHT_func}
\mathcal{F}(\beta) = \frac{1}{2} \| y - AM\beta \|_2^2 + \chi_{\mathcal{S}}(\beta),
\end{equation}
where 
\begin{equation*}
    \chi_{\mathcal{S}}(\beta) = \begin{cases}
        0 & \quad \beta \in \mathcal{S}, \\
        + \infty & \quad \text{otherwise,}
    \end{cases}
\end{equation*}
forces the unknown to belong to $\mathcal{S}$.

As for LASSO and Group LASSO, the subspaces $\mathcal{S}_i$, and especially the basis given by $M$, should be known a priori or should be inferred. In this setting, the sparsity level $s$ can play the role of a regularization parameter (such as $\lambda$ of LASSO and Group LASSO) and is tuned by suitable heuristic methods.

\subsection{Dictionary Learning}
\label{sec:dict_learn}
The baseline algorithms proposed in the previous sections need the a priori knowledge of the basis with respect to which the unknown is sparse. A possible approach to infer it is \textit{sparse dictionary learning} (also known as sparse coding), where the dictionary is learned from the data \cite{mairal2009online}. Note that this dictionary does not necessarily have to be a basis. More precisely, given a training set $\{ x_j\}_{j=1}^N$ with $x_j \in \R^n$, a sparsifying dictionary $D \in \R^{n \times d}$, where $d$ is the number of elements of the dictionary (the columns of the matrix $D$), can be found as
\begin{equation}
\label{dict_learn_func}
    \min_{D \in \R^{n \times d},\, \beta \in \R^{d \times N}} \frac{1}{N} \sum_{j=1}^{N} \bigg( \frac{1}{2} \| x_j - D \beta_j \|_2^2 + \lambda \| \beta_j \|_1 \bigg),
\end{equation}
where $\lambda > 0$ is a regularization parameter and $\beta_j\in\R^d$ is the sparse representation of $x_j$ with respect to the dictionary $D$. If the dictionary is fixed, the functional in \eqref{dict_learn_func} resembles the LASSO functional \eqref{LASSO_func}, except for the fact that the dictionary may not be a basis and that we are summing over the elements of the training set. However, in this case, the goal is mainly to learn $D$. In order to solve the minimization problem in \eqref{dict_learn_func}, we rely on the \textit{online method} used in \cite{mairal2009online}, which iteratively minimizes over one variable keeping the other ones fixed. We update the sparse representation $\beta$ using the least angle regression algorithm (LARS) and the dictionary $D$ by block coordinate descent. 
As reported in \cite{mairal2009online}, the computational cost of each iteration of the online method is dominated by the update of $\beta$, and the complexity of LARS applied on $D \in \R^{n \times d}$ is of order $d^3 + d^2 n$ (see \cite{efron2004least}).

We assume that $D \in \R^{n \times d}$ is a full-rank matrix. In general, the number of atoms $d$ is allowed to be larger than the original dimension of the signal $n$. Such a redundancy may arise with frames or with the union of multiple bases. Nevertheless, here we consider the non-redundant setting, with $d \leq n$. This is the case for all the numerical experiments reported in this work. 

Once the dictionary is learned, the solution to the linear inverse problem \eqref{statIP} can be computed by combining the baseline algorithms of the previous sections. 
Under the assumptions previously introduced, the matrix $D$ is injective, and the pseudo-inverse $D^+ = (D^T D)^{-1} D^T$ is its left inverse.  We thus tackle the minimization of the following functional:
\begin{equation}
    \label{DL_func}
    \mathcal{F}(x) = \frac{1}{2} \|y-Ax\|^2 + \lambda G(x), \qquad G(x) = \chi_{\operatorname{Im}(D)}(x) + \|D^{+}x\|_1,
\end{equation}
where $\chi_{\operatorname{Im}(D)}(x)$ is the characteristic function of the range $\operatorname{Im}(D)$.

Since the degenerate Gaussian mixture prior represents a group sparsity prior, it is useful to compare our algorithms with a ``group'' version of sparse dictionary learning. This has been explored with the name ``Block-sparse'' or ``Group-sparse'' dictionary learning \cite{zelnik2012dictionary,li2012group}.

Those techniques, however, do not share the same perspective on group sparsity as the one inspiring our proposed algorithms. The common feature behind all of them is the assumption that, when the signals are represented in a suitable basis or dictionary, there exists groups, or blocks, of coordinates that are simultaneously activated. In our degenerate Gaussian mixture approach, though, the signals can be clustered according to which group of coordinates they activate: each signal is thus associated with a single group. In block-sparse dictionary learning (and, similarly, in the Group LASSO approach previously exposed), the active components of each signal can also belong to (a small number of) different groups.

For this reason, we propose an alternative dictionary learning technique that is closer to our setup and can be employed for more direct comparisons.
We simply call it ``Group Dictionary Learning'':  after doing a subspace clustering of the training set as explained in Section~\ref{sec:unsup_alg}, it is possible to learn a dictionary $D_i$ for each group obtained with the clustering. 
Assuming that each $D_i \in \R^{n \times d_i}$ is injective for $i=1,...,L$, and adopting the same approach as in \eqref{DL_func}, we tackle the minimization of the following functional:
\begin{equation}
\label{eq:GDL}
    \begin{gathered}
    \mathcal{F}(x) = \frac{1}{2}\| y-Ax \|^2 + \lambda \hat{G}(x), \\
    \hat{G}(x) = \min_{i = 1,\ldots,L}G_i(x), \qquad  G_i(x) = \chi_{\operatorname{Im}(D_i)}(x) + \|D_i^{+}x\|_1.
\end{gathered}
\end{equation}

\begin{rem}
Differently from dictionary learning, the approaches proposed in Section~\ref{sec:our_alg} do not focus on the reconstruction of a dictionary $D$. However, such a dictionary might be obtained as a by-product, e.g.\ by computing the singular vectors of each covariance $\Sigma_i$ and by collecting them in a single matrix, but the theoretical properties of such an object are out of the scope of the present work. Nevertheless, we wish to underline that, as in the problem of dictionary learning, our algorithms do not require the knowledge of the dictionary $D$ as an input. It is worth noting that the parameters $\theta = (w_i, \mu_i, \Sigma_i)_{i=1}^L$ of our network are $O(n^2 \times L)$ while the dictionary has only $n \times d$ parameters, where $d$ is usually chosen as $O(n)$. Therefore, one of the reasons our methods seem more effective may be that they employ more parameters, a well-known (but possibly not fully understood) phenomenon related to overparametrization in deep learning.
\end{rem}

\section{Numerical results}
\label{sec:numerics}
In this section, we compare the {methods} proposed in Section~\ref{sec:our_alg} with the baseline algorithms discussed in Section~\ref{sec:comp} for $1$D denoising and deblurring problems with three datasets. Our experiments primarily aim to compare our {methods} with classical sparsity-promoting techniques on simplified signal classes, rather than striving for state-of-the-art results. The exploration of more complex inverse problems and the incorporation of real-world data fall outside the scope of this paper and are designated for future research.

\subsection{Datasets}
\label{subsec:datasets}
We consider three datasets with increasing complexity.
\begin{enumerate}
    \item \label{GMM_10} \textit{Gaussian mixture model}: 
    This dataset contains samples from a degenerate Gaussian mixture variable \eqref{Gauss_mix_var} where $I$ is a uniform variable (all the Gaussians of the mixture have the weight $w_i=\frac1L$).
    Each $X_i$ is a Gaussian random variable in $\R^n$, having mean $\mu_i = \textbf{0}$ and whose covariance matrix $\Sigma_i$ has zero entries everywhere, except from $s$ elements on the diagonal which are set to $1$. This indicates that we are considering a mixture of degenerate variables, each of which has support on an $s$-dimensional coordinate hyperplane of $\R^n$ and whose components are independent standard Gaussian random variables.
    In our experiments we consider $n=1000$, $s=\operatorname{rank}\Sigma_i = 20$ and $L=10$. Therefore, this dataset consists of very sparse random variables in a rather large ambient space. The sparsity is nevertheless very structured, since only $L=10$ combinations of $s$ active coefficients are considered.

    \item \label{S1D} \textit{Sinusoidal functions with one discontinuity}:
    This dataset contains functions with support in $[0,4 \pi]$ of the type 
    \begin{equation*}
        x(\tau) = \begin{cases}
            A \sin(\omega \tau) + B & \quad 
            0 \leq \tau \leq \tau_i \\
            A \sin(\omega \tau) + B + C & \quad \tau_i < \tau \leq 4 \pi
        \end{cases}
    \end{equation*}
    where the amplitude $A \sim \mathcal{U}(0.05, 0.1)$, the angular frequency $\omega \sim \mathcal{U}(1,2)$ and the vertical translations $B \sim \mathcal{U}(\frac{1}{2}, 3)$, $C \sim \mathcal{N}(0,0.2^2)$. The discontinuity points $\tau_i$ with $i=1,...,L$ and $L=10$ correspond to different subspaces and are equispaced points in $[0,4 \pi]$. Each signal $x$ is discretized on $1000$ equispaced points in $[0,4 \pi]$. These functions can be approximately seen as samples from a degenerate Gaussian mixture in a wavelet basis. While the coarse scale coefficients are expected to be nonzero for all the signals, the wavelet coefficients at the finer scales will be relevant only in the locations of the discontinuity, and negligible in the smooth regions. Therefore, each discontinuity determines a coordinate subspace with respect to the wavelet basis. Considering that the amplitude of the jump $C$ follows a Gaussian distribution, we posit that the fine-scale coefficients can be effectively modeled as degenerate Gaussian random vectors. We can also infer an estimate of the sparsity level in terms of the size of the support of the mother wavelet, and of the considered resolution scales.
    
    \item \label{FS2D} \textit{Truncated Fourier series with two discontinuities}:
    This dataset contains functions with support in $[0,4 \pi]$ of  the form 
    \begin{equation*}
        x(\tau) = \begin{cases}
            \sum_{d=1}^4 a_d \cos{(2 \pi d \tau)} + b_d \sin({2\pi d \tau)} & \quad 0 \leq \tau \leq \tau_{i}(1) \\
            \sum_{d=1}^4 a_d \cos{(2 \pi d \tau)} + b_d \sin({2\pi d \tau)} + C_1 & \quad \tau_{i}(1) < \tau \leq \tau_{i}(2) \\
            \sum_{d=1}^4 a_d \cos{(2 \pi d \tau)} + b_d \sin({2\pi d \tau)} + C_2 & \quad \tau_{i}(2) < \tau \leq 4 \pi 
        \end{cases}
    \end{equation*}
    where the Fourier coefficients $a_d,b_d \sim \mathcal{N}(0.1,0.1^2)$ for $d=1,...,4$ and the vertical translations $C_1,C_2 \sim \mathcal{N}(0,0.2^2)$. As for dataset~\ref{S1D}, the discontinuity points are $10$ equispaced points in $[0,4 \pi]$ and $\tau$ is discretized with $1000$ equispaced points in $[0,4 \pi]$. However, each function has either one or two discontinuities. More precisely, if $\tau_i(1) = \tau_i(2)$ the function has one discontinuity, otherwise it has two discontinuities. Therefore the subspaces are the ones representing functions with $2$ discontinuities (all the possible combinations of $10$ elements in groups of $2$) and the ones representing functions with $1$ discontinuity ($10$ subspaces), so the total number of subspaces is $L = \binom{10}{2}+10 = 55$. For the same reasons discussed above for dataset~\ref{S1D}, these functions can be seen as approximate samples from a degenerate Gaussian mixture distribution in the wavelet domain. The level of sparsity can be estimated as in the case of the previous dataset, updating it to the presence of two distinct singularities.
\end{enumerate}

\subsection{Methods}
\label{subsec:methods}
In the following experiments, we test our supervised and unsupervised {approaches} described in Sections~\ref{sec:sup_alg} and \ref{sec:unsup_alg} and compare them with the baseline algorithms described in Section~\ref{sec:comp}. More precisely, we consider the following methods.
\begin{enumerate}[label=(\Alph*)]
    \item \label{item:sup} \textit{Supervised}($\S$\ref{sec:sup_alg}). {We consider both the case in which the weights of the network are randomly initialized, denoted as (A), and the one in which they are initialized by the values obtained as an outcome of the unsupervised procedure, denoted as (A+B).}
    
    \item \label{item:unsup} \textit{Unsupervised }($\S$\ref{sec:unsup_alg}). 
    
    \item \label{item:DL} \textit{Dictionary learning} ($\S$\ref{sec:dict_learn}). We learn a sparsifying dictionary $D$ for the training set, using the Lasso LARS algorithm to solve \eqref{dict_learn_func}; then, we employ the learned $D$ to denoise the test signal by minimizing \eqref{DL_func}. We choose the number of elements of the dictionary as $d = \frac{n}{2}$, where $n$ is the signal length, in order to balance expressivity and numerical efficiency.
    
    \item \label{item:GDL} \textit{Group dictionary learning} ($\S$\ref{sec:dict_learn}). 
    After dividing the training set into $L$ clusters and learning a sparsifying dictionary $D_i$ for each group, we minimize \eqref{eq:GDL}. In this case, we choose $d = \frac{n}{2L}$.
    
    \item \label{item:IHT_SVD} \textit{IHT with SVD basis} ($\S$\ref{sec:IHT}). We infer the basis with respect to which the unknown is sparse by computing the SVD of the empirical covariance matrix of the training set and by considering the orthonormal basis composed by the eigenvectors. Then, we choose $\mathcal{S}$ as the union of all the $s$-dimensional coordinate subspaces in $\R^n$ w.r.t.\ the inferred sparsity basis. Here we choose the degree of sparsity $s$ by minimizing the relative MSE (computed w.r.t.\ the norm of the original signals) over the training set.   
    
    \item \label{item:GIHT_SVD} \textit{IHT with SVD bases of groups} ($\S$\ref{sec:IHT}). We divide the training set into $L$ clusters as explained in Section~\ref{sec:unsup_alg}. Then, for each group provided by the clustering, we infer the basis with respect to which that group is sparse by computing the SVD of its empirical covariance matrix and by considering the eigenvectors corresponding to the $s$ largest eigenvalues, where the degree of sparsity $s$ is chosen as for \ref{item:IHT_SVD}. Finally, we choose $\mathcal{S}$ as the union of the $s$-dimensional subspaces spanned by the bases inferred. 
    
    \item \label{item:IHT_known} \textit{IHT with known basis} ($\S$\ref{sec:IHT}). We suppose to know the basis with respect to which the unknown is sparse. In particular, for Dataset~\ref{GMM_10} we consider the canonical basis, while Datasets~\ref{S1D} and \ref{FS2D} require a more complicated treatment. Indeed, as discussed in Section~\ref{subsec:datasets}, their representation in a suitable wavelet basis is sparse at fine scales. For this reason, we preprocess the datasets by applying the wavelet transform, keep the low-scale coefficients fixed, and apply IHT only to the fine scales. We choose the wavelet basis generated by the Daubechies wavelet with $6$ vanishing moments. 
    
    \item \label{item:LASSO_SVD} \textit{LASSO with SVD basis} ($\S$\ref{sec:LASSO}). We infer the basis with respect to which the unknown is sparse as in \ref{item:IHT_SVD}. Then, we minimize \eqref{LASSO_func}.
    
    \item \label{item:GLASSO_SVD} \textit{Group LASSO with SVD bases} ($\S$\ref{sec:Glasso}). We divide the training set into $L$ clusters as explained in Section~\ref{sec:unsup_alg}. Then, for each group provided by the clustering, we infer the basis with respect to which that group is sparse as explained in \ref{item:GIHT_SVD}. Here, however, we consider the complete bases provided by the SVDs of the empirical covariance matrices of the groups. We choose the power $-\frac{1}{2}$ of the empirical covariance matrices of each group as penalty matrices $K_i$. 
    
    \item \label{item:LASSO_known} \textit{LASSO with known basis} ($\S$\ref{sec:LASSO}). 
    We incorporate in LASSO the knowledge of the basis with respect to which the unknown is sparse, as already explained for \ref{item:IHT_known}.
    This implies that, for Dataset~\ref{GMM_10}, we solve LASSO with the canonical basis, whereas for Datasets~\ref{S1D} and \ref{FS2D} we first compute the wavelet transform and then solve LASSO only on the high-scale coefficients.

\end{enumerate} 

Whenever needed (methods \ref{item:DL}, \ref{item:GDL}, \ref{item:LASSO_SVD}, \ref{item:GLASSO_SVD}, and \ref{item:LASSO_known}), we always choose the optimal regularization parameter $\lambda$, namely, the one minimizing the relative MSE over the training set. Further, whenever needed (methods \ref{item:sup}, \ref{item:unsup}, \ref{item:GDL}, \ref{item:IHT_SVD}, and \ref{item:GLASSO_SVD}), we always suppose to know the number of Gaussians $L$ in the mixture.

Methods \ref{item:unsup}, \ref{item:GDL}, \ref{item:IHT_SVD}, and \ref{item:GLASSO_SVD} rely on clustering. The subspace clustering procedure described in Section \ref{sec:unsup_alg} is applied to the signals for Dataset~\ref{GMM_10} and to their derivatives computed as finite difference approximations for Datasets~\ref{S1D} and \ref{FS2D}. Indeed, for the latest datasets, the derivative of the signals highlights the discontinuity points and allows for a more efficient clustering.

It is worth noting that methods \ref{item:IHT_known} and \ref{item:LASSO_known} take advantage of the knowledge of the basis with respect to which the unknown is sparse. For this reason, they do not constitute a fair comparison for all the other algorithms, which do not use this piece of information. However, we are interested to see if they can outperform our methods.

\subsection{Denoising}
\label{sec:denois_comp}
In this section, we focus on the denoising problem with $10\%$ of noise, namely $y = x + \epsilon$, where $\epsilon$ is sampled from $\mathcal{N}(\mathbf{0},\sigma^2 I)$ and $\sigma$ is the $10\%$ of the maximum of the amplitudes of the training set signals, i.e.\ $\max_f (\max f - \min f)$ where $f$ is a training sample and the maximum is taken for each dataset. We compare the performances of the methods reported in Section \ref{subsec:methods}.

In Table~\ref{tab_denoising} we show the mean over $2000$ signals of the test set of the relative MSE between the original signals and the different reconstructions. We observe that our unsupervised {technique} \ref{item:unsup} provides the best reconstructions for Datasets~\ref{GMM_10} and \ref{FS2D}, and for Dataset~\ref{S1D} it is only outperformed by group dictionary learning \ref{item:GDL}. However, dictionary learning is computationally more expensive than our unsupervised method. 
Indeed, although both techniques involve a preliminary clustering step, group dictionary learning also requires computing $L$ sparsifying dictionaries, i.e., solving $L$ optimization problems as \eqref{dict_learn_func}. In our experiments, for each dictionary, the online method presented in Section \ref{sec:dict_learn} performed nearly $300$ iterations, with the cost of a single update depending on the size of both the dictionary and the sample, as discussed in Section \ref{sec:dict_learn}. On the other hand, the unsupervised technique only requires estimating $L$ means and covariances prior to the application of \eqref{eq:nnbayes}, and the difference in performance is minimal
(as it can be seen in Figure~\ref{fig:denoising_main_body}). From Table~\ref{tab_denoising} it seems clear that the best baseline algorithms to which we can compare our approaches are dictionary learning \ref{item:DL}, among the methods that do not use clustering, and group dictionary learning \ref{item:GDL}, among the methods using the groups obtained by the clustering of the training set. For this reason, in Figure~\ref{fig:denoising_main_body} we show a qualitative comparison between the reconstructions provided by our methods, dictionary learning, and group dictionary learning. For completeness, we provide the remaining qualitative comparisons in Figure~\ref{fig:denoising_appendix} of Appendix~\ref{app:numerics}. We notice that for Datasets~\ref{S1D} and \ref{FS2D} the algorithms learn all the discontinuities and try to remove those not present in the signal.

\begin{table}
\centering 
\caption{Relative MSE values for the denoising problem with $10 \%$ noise.}
\label{tab_denoising}
\begin{tabular}{l|ccc}
&Dataset~\ref{GMM_10}&Dataset~\ref{S1D}&Dataset~\ref{FS2D} \\\hline
Supervised \ref{item:sup} & $\num{1.97e0} \%$ & $\num{7.03e-3} \%$ & $\num{2.71e-2} \%$ \\\hline
{Supervised \hyperref[item:sup]{(A+B)} }& $\mathbf{\num{0.98e0}} \%$ & $\num{1.79e-3} \%$ & $\num{6.38e-3} \%$ \\\hline
Unsupervised \ref{item:unsup} & $\mathbf{\num{0.98e0}} \%$ & $\num{1.78e-3} \%$ & $\mathbf{\num{6.32e-3}} \%$ \\\hline
Dictionary learning \ref{item:DL} & $\num{4.18e0} \%$ & $\num{3.43e-3} \%$ & $\num{7.13e-3} \%$  \\\hline
Group dictionary learning \ref{item:GDL} & $\num{0.99e0} \%$ & $\mathbf{\num{1.70e-3}} \%$ & $\num{2.42e-2} \%$  \\\hline
IHT with SVD basis \ref{item:IHT_SVD} & $\num{9.93e0} \%$ & $\num{1.25e-2} \%$ & $\num{3.97e-1} \%$  \\\hline
IHT with SVD bases of groups \ref{item:GIHT_SVD} & $\num{0.99e0} \%$ & $\num{2.04e-3} \%$ & $\num{3.97e-1} \%$  \\\hline
IHT with known basis \ref{item:IHT_known} & $\num{2.78e0} \%$ & $\num{1.22e-2} \%$ & $\num{2.46e-2} \%$  \\\hline
LASSO with SVD basis \ref{item:LASSO_SVD} & $\num{9.24e0} \%$ & $\num{1.55e-2} \%$ & $\num{3.07e-1} \%$  \\\hline
Group LASSO with SVD bases \ref{item:GLASSO_SVD} & $\num{3.01e0} \%$ & $\num{3.71e-3} \%$ & $\num{8.31e-1} \%$  \\\hline
LASSO with known basis \ref{item:LASSO_known} & $\num{4.69e0} \%$ & $\num{1.00e-2} \%$ & $\num{2.15e-2} \%$  \\\hline
\end{tabular}
\end{table}

\begin{figure}
\centering
\includegraphics[scale=0.88]{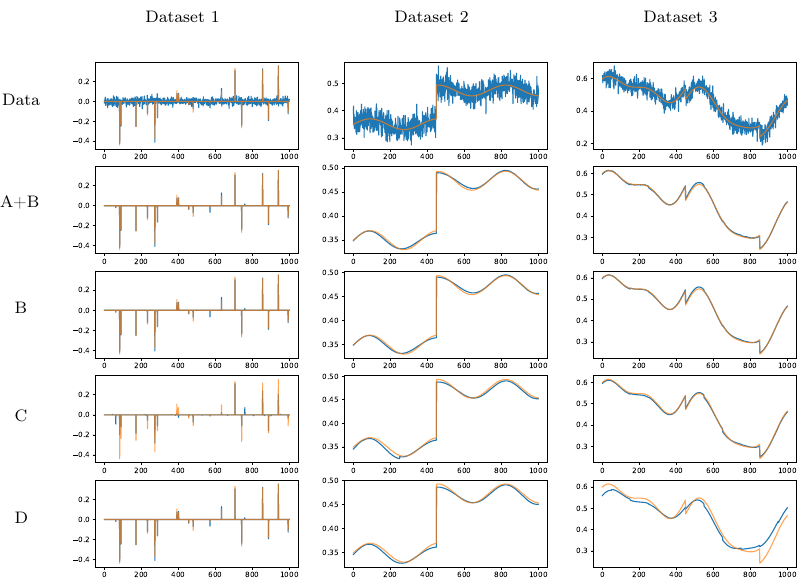}

\caption{Qualitative comparison for the denoising problem. In each column we show a signal of the test set from Datasets~\ref{GMM_10}, \ref{S1D} and \ref{FS2D}, respectively. In each row we report the original data in orange and the noisy data, the reconstructions obtained with the methods {supervised (initialized with the unsupervised one) \hyperref[item:sup]{(A+B)}}, unsupervised \ref{item:unsup}, dictionary learning \ref{item:DL}, and group dictionary learning \ref{item:GDL}, respectively, in blue.} 
\label{fig:denoising_main_body}
   
\end{figure}

\subsubsection*{Does clustering really matter?}

Considering that accurately clustering the training set is challenging (see e.g.\ Dataset~\ref{FS2D}),  we question whether precise clustering is truly necessary for employing the unsupervised algorithm proposed in Section~\ref{sec:unsup_alg}. For this purpose, in Table~\ref{tab_2} we show the mean over $2000$ signals of the test set of the relative MSE between the original signals and the reconstructions provided by the unsupervised approach with exact clustering and with random clustering. 
We observe that correct clustering is important for Dataset 1, but not so important for Datasets 2 and 3. We believe that this is due to the fact that in Datasets 2 and 3 the energy of the signals is shared between the smooth part, which is independent of the clustering, and the discontinuities. In Dataset 1, the absence of the (non-zero) smooth part makes the dependence on the clustering stronger.

\begin{table}
\centering 
\caption{Relative MSE values for the denoising problem with $10 \%$ noise with the unsupervised approach \ref{item:unsup} when using exact vs. random clustering. } 
\label{tab_2}
\begin{tabular}{l|ccc}
&Dataset~\ref{GMM_10}&Dataset~\ref{S1D}&Dataset~\ref{FS2D} \\\hline
Exact & $\num{0.97e0} \%$ & $\num{1.66e-3} \%$ & $\num{3.37e-3} \%$ \\\hline 
Learned & $\num{0.98e0}\%$ & $\num{1.80e-3} \%$ & $\num{6.32e-3} \%$ \\\hline
Random & $ \num{8.25e0} \pm \num{0.04} \%$ & $(3.46\pm 0.25)\times 10^{-3}\%$ & $(5.70 \pm 0.55)\times 10^{-3} \%$ \\\hline
\end{tabular}
\end{table}

{
\subsubsection*{Does regularization really matter?} We investigated the impact of the regularization term $\mathcal{J}$ added to the training loss in the supervised approach, as discussed in Section \ref{sec:sup_alg}. Table \ref{tab_3} shows that, when applied to Dataset 1, the use of the Frobenius or the nuclear norms does not improve the results in terms of MSE. In both cases, the value of the regularization parameter is heuristically chosen. Notice that also the matrices $\Sigma_i$ trained without any penalty term naturally show a good degree of sparsity (quantified by the ratio between their average rank and the size of the signal). }
\begin{table}
\centering 
\caption{{Denoising problem with $10 \%$ noise for Dataset 1 with the supervised approach \ref{item:unsup} without penalty, with a nuclear-norm penalty, and with a Frobenius-norm penalty. We compare the values of the relative MSE and the ratio between the average rank of the learned $\Sigma_i$ and the signal size $n$.\\} 
}
\label{tab_3} {
\begin{tabular}{l|ccc}
&No regularization & Nuclear norm & Frobenius norm \\\hline
Relative MSE & $1.97 \%$ & $2.18 \%$ & $2.11 \%$ \\\hline 
Average rank of $\Sigma_i$ & 0.41 $n$ & 0.22 $n$ & 0.92 $n$ \\\hline
\end{tabular}}
\end{table}

\subsection{Deblurring}
\label{sec:deblur_comp}
In this section, we focus on a deblurring problem with $10\%$ of noise, namely we consider the problem $y = q \ast x + \epsilon$, where $\ast$ represents the discrete convolution, the noise $\epsilon$ is sampled from $\mathcal{N}(\mathbf{0},\sigma^2 I)$ and $\sigma$ is the $10\%$ of the maximum of the amplitudes of the training set signals. We choose the filter $q$ to be Gaussian, i.e.\ the entries of $q$ are a finite discretization of the density of $\mathcal{N}(0,\sigma_b^2)$ in a neighborhood of $0$. For Dataset~\ref{GMM_10} we choose $\sigma_b = 1$, while for Datasets~\ref{S1D} and \ref{FS2D} we set $\sigma_b = 30$ and $\sigma_b = 20$, respectively. The latter values of $\sigma_b$ are larger because the effect of the convolution on piecewise smooth signals (Datasets~\ref{S1D} and \ref{FS2D}) is less visible than on Dirac deltas (Dataset~\ref{GMM_10}). Since for the denoising problem our unsupervised method provides better results than the supervised one, we only show the results for the deblurring problem using the unsupervised technique and we compare it with the most significant baseline algorithms. In particular, we consider the following methods: Unsupervised \ref{item:unsup}, Dictionary learning \ref{item:DL}, Group dictionary learning \ref{item:GDL}, IHT with SVD bases of groups \ref{item:GIHT_SVD} and Group LASSO with SVD bases \ref{item:GLASSO_SVD}. 
We exclude from the comparisons the methods that do not employ clustering, except for dictionary learning, as they performed significantly worse in the denoising experiments.

In Table~\ref{tab_deblurring} we show the mean over $2000$ signals of the test set of the relative MSE between the original signals and the different reconstructions. As for the denoising problem, our unsupervised method outperforms the others.

In Figure~\ref{fig:deblurring_main_body} we show a qualitative comparison between the reconstructions provided by our methods, dictionary learning, and group dictionary learning. For completeness, we provide the remaining qualitative comparisons in  Appendix~\ref{app:numerics}.

\begin{table}
\centering 
\caption{Relative MSE values for the deblurring problem with $10 \%$ noise.}
\label{tab_deblurring}
\begin{tabular}{l|ccc}
&Dataset~\ref{GMM_10}&Dataset~\ref{S1D}&Dataset~\ref{FS2D} \\\hline
Unsupervised \ref{item:unsup} & $\mathbf{\num{3.68e0}} \%$ & $\mathbf{\num{2.65e-3}} \%$ & $\mathbf{\num{1.01e-2}} \%$ \\\hline
Dictionary learning \ref{item:DL} & $\num{14.32} \%$ & $\num{6.61e-3} \%$ & $\num{1.28e-2} \%$  \\\hline
Group dictionary learning \ref{item:GDL} & $\num{13.51} \%$ & $\num{4.62e-3} \%$ & $\num{3.41e-2} \%$  \\\hline
IHT with SVD bases of groups \ref{item:GIHT_SVD} & $\num{3.80} \%$ & $\num{5.54e-3} \%$ & $\num{9.48e-1} \%$  \\\hline
Group LASSO with SVD bases \ref{item:GLASSO_SVD} & $\num{11.48} \%$ & $\num{1.34e-2} \%$ & $\num{9.11e-1} \%$  \\\hline
\end{tabular}
\end{table}

\begin{figure}
\centering 
    \includegraphics[scale=0.9]{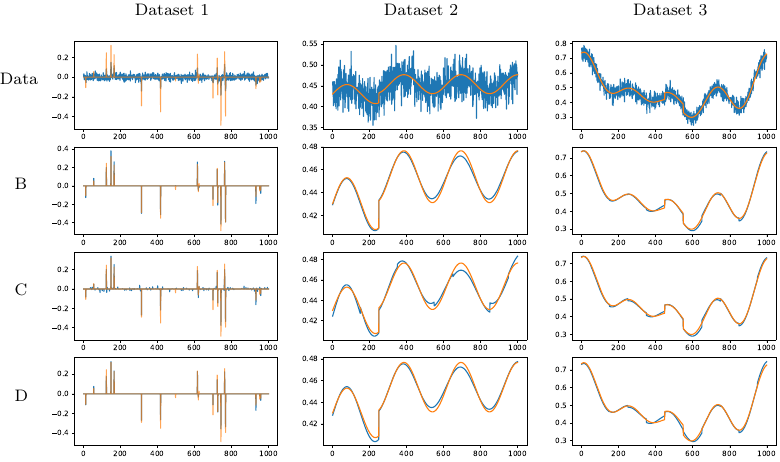}
    \caption{Qualitative comparison for the deblurring problem.
    In each column we show a signal of the test set from Datasets~\ref{GMM_10}, \ref{S1D} and \ref{FS2D}, respectively. In each row we report the original data in orange and the noisy data and the reconstructions obtained with the unsupervised \ref{item:unsup}, the dictionary learning \ref{item:DL}, and the group dictionary learning \ref{item:GDL} approach, respectively, in blue.}
\label{fig:deblurring_main_body}
\end{figure}

\section{Conclusions}
\label{sec:conclusions}
We introduced an innovative approach to sparse optimization for inverse problems, leveraging an explicit formula for the MMSE estimator in the context of a mixture of degenerate Gaussian random variables.  This methodology, rooted in statistical learning theory, follows a different approach to sparsity promotion compared to the deterministic optimization paradigm (e.g., Lasso) and the Bayesian inverse problem framework. Our reconstruction formula exhibits a notable connection to the self-attention mechanism underlying the transformer architecture, offering an efficient training process: this is useful whenever the mixture model, corresponding to the sparsity properties of the signals of interest, is unknown. This training can be conducted in both supervised and unsupervised modes.

To validate our approach, we conducted numerical implementations (both supervised and unsupervised) and compared them against established baseline algorithms for sparse optimization, such as Lasso, IHT, and their group variants. Additionally, we incorporated a dictionary learning strategy for fair comparisons. The experiments focused on three {datasets of 1D signals (discretized as vectors in $\R^n$)} featuring sparse or compressible signals, addressing denoising and deblurring tasks.

Our findings indicate that the unsupervised method consistently outperforms baseline algorithms in nearly all experiments, demonstrating superior performance with lower computational costs. However, a notable limitation of our work lies in its numerical scalability, particularly concerning larger and higher-dimensional datasets. Addressing this limitation will be a key focus in future research endeavors.

\section*{Acknowledgements}
It is a pleasure to thank Ernesto De Vito and Matti Lassas for stimulating discussions on some of the aspects of this work. This material is based upon work supported by the Air Force Office of Scientific Research under award numbers FA8655-20-1-7027 and FA8655-23-1-7083. We acknowledge the support of Fondazione Compagnia di San Paolo. Co-funded by the European Union (ERC, SAMPDE, 101041040). Views and opinions expressed are however those of the authors only and do not necessarily reflect those of the European Union or the European Research Council. Neither the European Union nor the granting authority can be held responsible for them. 
The research of LR has been funded by PNRR - M4C2 - Investimento 1.3. Partenariato Esteso PE00000013 - ``FAIR - Future Artificial Intelligence Research'' - Spoke 8 ``Pervasive AI'', which is funded by the European Commission under the NextGeneration EU programme.
The authors are members of the ``Gruppo Nazionale per l’Analisi Matematica, la Probabilità e le loro Applicazioni'', of the ``Istituto Nazionale di Alta Matematica''. The research was supported in part by the MIUR Excellence Department Project awarded to Dipartimento di Matematica, Università di Genova, CUP D33C23001110001. 

\bibliography{bibliografia}

\begin{thebibliography}{10}

\bibitem{alberti2021learning}
Giovanni~S Alberti, Ernesto De~Vito, Matti Lassas, Luca Ratti, and Matteo
  Santacesaria.
\newblock {Learning the optimal Tikhonov regularizer for inverse problems}.
\newblock {\em Advances in Neural Information Processing Systems},
  34:25205--25216, 2021.

\bibitem{beck2017first}
Amir Beck.
\newblock {\em First-order methods in optimization}.
\newblock SIAM, 2017.

\bibitem{blumensath2013compressed}
Thomas Blumensath.
\newblock Compressed sensing with nonlinear observations and related nonlinear
  optimization problems.
\newblock {\em IEEE Transactions on Information Theory}, 59(6):3466--3474,
  2013.

\bibitem{bocchinfuso2023bayesian}
A.~Bocchinfuso, D.~Calvetti, and E.~Somersalo.
\newblock Bayesian sparsity and class sparsity priors for dictionary learning
  and coding.
\newblock {\em Journal of Computational Mathematics and Data Science},
  11:100094, 2024.

\bibitem{2019-calvetti-etal}
D.~Calvetti, E.~Somersalo, and A.~Strang.
\newblock Hierachical {B}ayesian models and sparsity: {$\ell_2$}-magic.
\newblock {\em Inverse Problems}, 35(3):035003, 26, 2019.

\bibitem{Calvetti2023}
Daniela Calvetti and Erkki Somersalo.
\newblock {\em Hierarchical Models and Bayesian Sparsity}, pages 183--210.
\newblock Springer International Publishing, Cham, 2023.

\bibitem{candes2006robust}
Emmanuel~J Cand{\`e}s, Justin Romberg, and Terence Tao.
\newblock Robust uncertainty principles: Exact signal reconstruction from
  highly incomplete frequency information.
\newblock {\em IEEE Transactions on information theory}, 52(2):489--509, 2006.

\bibitem{cucker2002mathematical}
Felipe Cucker and Steve Smale.
\newblock On the mathematical foundations of learning.
\newblock {\em Bulletin of the American mathematical society}, 39(1):1--49,
  2002.

\bibitem{daubechies2004iterative}
Ingrid Daubechies, Michel Defrise, and Christine De~Mol.
\newblock An iterative thresholding algorithm for linear inverse problems with
  a sparsity constraint.
\newblock {\em Communications on Pure and Applied Mathematics: A Journal Issued
  by the Courant Institute of Mathematical Sciences}, 57(11):1413--1457, 2004.

\bibitem{de2023convergence}
Maarten~V de~Hoop, Nikola~B Kovachki, Nicholas~H Nelsen, and Andrew~M Stuart.
\newblock Convergence rates for learning linear operators from noisy data.
\newblock {\em SIAM/ASA Journal on Uncertainty Quantification}, 11(2):480--513,
  2023.

\bibitem{donoho2006compressed}
David~L Donoho.
\newblock Compressed sensing.
\newblock {\em IEEE Transactions on information theory}, 52(4):1289--1306,
  2006.

\bibitem{efron2004least}
Bradley Efron, Trevor Hastie, Iain Johnstone, and Robert Tibshirani.
\newblock Least angle regression.
\newblock {\em Annals of Statistics}, pages 407--451, 2004.

\bibitem{friedman2010note}
Jerome Friedman, Trevor Hastie, and Robert Tibshirani.
\newblock A note on the group lasso and a sparse group lasso.
\newblock {\em arXiv preprint arXiv:1001.0736}, 2010.

\bibitem{giles1987learning}
C~Lee Giles and Tom Maxwell.
\newblock Learning, invariance, and generalization in high-order neural
  networks.
\newblock {\em Applied optics}, 26(23):4972--4978, 1987.

\bibitem{gribonval2011should}
R{\'e}mi Gribonval.
\newblock Should penalized least squares regression be interpreted as maximum a
  posteriori estimation?
\newblock {\em IEEE Transactions on Signal Processing}, 59(5):2405--2410, 2011.

\bibitem{hale2007fixed}
Elaine~T Hale, Wotao Yin, and Yin Zhang.
\newblock A fixed-point continuation method for l1-regularized minimization
  with applications to compressed sensing.
\newblock {\em CAAM TR07-07, Rice University}, 43:44, 2007.

\bibitem{horesh-haber-2009}
L.~Horesh and E.~Haber.
\newblock Sensitivity computation of the {$l_1$} minimization problem and its
  application to dictionary design of ill-posed problems.
\newblock {\em Inverse Problems}, 25(9):095009, 20, 2009.

\bibitem{huang-haber-horesh-2012}
Hui Huang, Eldad Haber, and Lior Horesh.
\newblock Optimal estimation of {$\ell_1$}-regularization prior from a
  regularized empirical {B}ayesian risk standpoint.
\newblock {\em Inverse Probl. Imaging}, 6(3):447--464, 2012.

\bibitem{kovachki2023neural}
Nikola~B Kovachki, Zongyi Li, Burigede Liu, Kamyar Azizzadenesheli, Kaushik
  Bhattacharya, Andrew~M Stuart, and Anima Anandkumar.
\newblock Neural operator: Learning maps between function spaces with
  applications to pdes.
\newblock {\em J. Mach. Learn. Res.}, 24(89):1--97, 2023.

\bibitem{kundu2008gmm}
Achintya Kundu, Saikat Chatterjee, A~Sreenivasa Murthy, and TV~Sreenivas.
\newblock Gmm based bayesian approach to speech enhancement in signal/transform
  domain.
\newblock In {\em 2008 IEEE International Conference on Acoustics, Speech and
  Signal Processing}, pages 4893--4896. IEEE, 2008.

\bibitem{lee2006efficient}
Honglak Lee, Alexis Battle, Rajat Raina, and Andrew Ng.
\newblock Efficient sparse coding algorithms.
\newblock {\em Advances in neural information processing systems}, 19, 2006.

\bibitem{li2012group}
Shutao Li, Haitao Yin, and Leyuan Fang.
\newblock Group-sparse representation with dictionary learning for medical
  image denoising and fusion.
\newblock {\em IEEE Transactions on biomedical engineering}, 59(12):3450--3459,
  2012.

\bibitem{lu2012robust}
Can-Yi Lu, Hai Min, Zhong-Qiu Zhao, Lin Zhu, De-Shuang Huang, and Shuicheng
  Yan.
\newblock Robust and efficient subspace segmentation via least squares
  regression.
\newblock In {\em Computer Vision--ECCV 2012: 12th European Conference on
  Computer Vision, Florence, Italy, October 7-13, 2012, Proceedings, Part VII
  12}, pages 347--360. Springer, 2012.

\bibitem{lunz2018adversarial}
Sebastian Lunz, Ozan {\"O}ktem, and Carola-Bibiane Sch{\"o}nlieb.
\newblock Adversarial regularizers in inverse problems.
\newblock {\em Advances in neural information processing systems}, 31, 2018.

\bibitem{lustig2008compressed}
Michael Lustig, David~L Donoho, Juan~M Santos, and John~M Pauly.
\newblock {Compressed sensing MRI}.
\newblock {\em IEEE signal processing magazine}, 25(2):72--82, 2008.

\bibitem{mairal2009online}
Julien Mairal, Francis Bach, Jean Ponce, and Guillermo Sapiro.
\newblock Online dictionary learning for sparse coding.
\newblock In {\em Proceedings of the 26th annual international conference on
  machine learning}, pages 689--696, 2009.

\bibitem{mallat2008wavelet}
Stephane Mallat.
\newblock {\em A Wavelet Tour of Signal Processing: The Sparse Way}.
\newblock Academic Press, 2008.

\bibitem{mantini2021cqnn}
Pranav Mantini and Shishr~K Shah.
\newblock Cqnn: Convolutional quadratic neural networks.
\newblock In {\em 2020 25th International Conference on Pattern Recognition
  (ICPR)}, pages 9819--9826. IEEE, 2021.

\bibitem{scikit-learn}
F.~Pedregosa, G.~Varoquaux, A.~Gramfort, V.~Michel, B.~Thirion, O.~Grisel,
  M.~Blondel, P.~Prettenhofer, R.~Weiss, V.~Dubourg, J.~Vanderplas, A.~Passos,
  D.~Cournapeau, M.~Brucher, M.~Perrot, and E.~Duchesnay.
\newblock Scikit-learn: Machine learning in {P}ython.
\newblock {\em Journal of Machine Learning Research}, 12:2825--2830, 2011.

\bibitem{schwab2019deep}
Johannes Schwab, Stephan Antholzer, and Markus Haltmeier.
\newblock Deep null space learning for inverse problems: convergence analysis
  and rates.
\newblock {\em Inverse Problems}, 35(2):025008, 2019.

\bibitem{shiryaev2016probability}
Albert~N Shiryaev.
\newblock {\em Probability-1}, volume~95.
\newblock Springer, 2016.

\bibitem{tibshirani1996regression}
Robert Tibshirani.
\newblock Regression shrinkage and selection via the lasso.
\newblock {\em Journal of the Royal Statistical Society: Series B
  (Methodological)}, 58(1):267--288, 1996.

\bibitem{vaswani2017attention}
Ashish Vaswani, Noam Shazeer, Niki Parmar, Jakob Uszkoreit, Llion Jones,
  Aidan~N Gomez, {\L}ukasz Kaiser, and Illia Polosukhin.
\newblock Attention is all you need.
\newblock {\em Advances in neural information processing systems}, 30, 2017.

\bibitem{yu2011solving}
Guoshen Yu, Guillermo Sapiro, and St{\'e}phane Mallat.
\newblock Solving inverse problems with piecewise linear estimators: From
  gaussian mixture models to structured sparsity.
\newblock {\em IEEE Transactions on Image Processing}, 21(5):2481--2499, 2011.

\bibitem{yuan2006model}
Ming Yuan and Yi~Lin.
\newblock Model selection and estimation in regression with grouped variables.
\newblock {\em Journal of the Royal Statistical Society: Series B (Statistical
  Methodology)}, 68(1):49--67, 2006.

\bibitem{zelnik2012dictionary}
Lihi Zelnik-Manor, Kevin Rosenblum, and Yonina~C Eldar.
\newblock Dictionary optimization for block-sparse representations.
\newblock {\em IEEE Transactions on Signal Processing}, 60(5):2386--2395, 2012.

\end{thebibliography}
\bibliographystyle{plain}

\addresseshere

\appendix
\section*{Appendix}
\section{Proof of Theorem~\ref{thm:bayes_est}}
\label{proof_bayes_est}

The proof of Theorem~\ref{thm:bayes_est} follows easily using the next two lemmas. The first lemma provides the formula of the Bayes estimator for the denoising problem with a random variable that comes from a general mixture distribution. The second one establishes an explicit formula in the case of Gaussian mixture distributions.

\begin{lemma}
\label{denoise_mix_distr}
Suppose that Assumption~\ref{ass:0} holds true. Consider the statistical linear inverse problem \eqref{statIP}, where $x$ is sampled from a mixture of random variables, as in Definition~\ref{mix_var}, and $A \in \R^{m \times n}$. Let $Y_i=A X_i+E$, for $i=1,\dots,L$. Then
    \begin{enumerate}
        \item \label{i} the density of $Y$ is $\displaystyle p_{Y}(y) = \sum_{i=1}^L w_i p_{Y_i}(y)$, where $w_i = \mathbb{P}(I=i)$;
        \item \label{ii} $\displaystyle \E[X|Y=y] = \sum_{i=1}^L \frac{w_i p_{Y_i}(y)}{p_{Y}(y)} \E[X_i|Y_i=y]$.
    \end{enumerate}
\end{lemma}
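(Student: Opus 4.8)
The plan is to reduce both claims to elementary conditioning on the selector variable $I$ from Definition~\ref{mix_var}. Writing $X=\sum_{i=1}^L X_i\mathbbm{1}_{\{i\}}(I)$, on the event $\{I=i\}$ one has $X=X_i$ and hence $Y=AX+E=AX_i+E$. Since $I$ is independent of $(X_1,\dots,X_L)$ and, reading Assumption~\ref{ass:0} in the natural way so that $E$ is independent of the whole mechanism generating $X$ (that is, of $I$ and of each $X_i$), the pair $(X_i,E)$ is independent of $I$; consequently the conditional law of $Y$ given $\{I=i\}$ coincides with the law of $Y_i=AX_i+E$, and the conditional law of $(X_i,Y_i)$ given $\{I=i\}$ coincides with its unconditional law. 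Here I use, as is implicit in the statement, that $E$ admits a density (in the application $E$ is Gaussian with invertible covariance), so that the independent sum $Y_i=AX_i+E$ has a density $p_{Y_i}$ obtained by convolving the law of $AX_i$ with the density of $E$; the same holds for $Y$.

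For part (i) I would apply the law of total probability over the finite partition $\{I=i\}_{i=1}^L$: for every Borel $B\subseteq\R^m$,
\[
\Prob(Y\in B)=\sum_{i=1}^L \Prob(I=i)\,\Prob(Y\in B\mid I=i)=\sum_{i=1}^L w_i\,\Prob(Y_i\in B)=\int_B\Big(\sum_{i=1}^L w_i\,p_{Y_i}(y)\Big)\,dy,
\]
which, by uniqueness of the density, gives $p_Y=\sum_{i=1}^L w_i\,p_{Y_i}$ almost everywhere.

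For part (ii) I would use the tower property, conditioning first on $(Y,I)$ and then on $Y$:
\[
\E[X\mid Y=y]=\sum_{i=1}^L \Prob(I=i\mid Y=y)\,\E[X\mid Y=y,\,I=i].
\]
On $\{I=i\}$ we have $X=X_i$ and $Y=Y_i$, and since $(X_i,Y_i)$ is independent of $I$ this yields $\E[X\mid Y=y,\,I=i]=\E[X_i\mid Y_i=y]$. The posterior weight is computed by Bayes' rule from the densities above, $\Prob(I=i\mid Y=y)=w_i\,p_{Y_i}(y)/p_Y(y)$. Substituting these two identities into the decomposition gives the asserted formula.

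The work here is not in any single hard step but in the measure-theoretic bookkeeping: one should ensure that the conditional densities and the (regular) conditional expectations are well defined and that manipulations of the type ``on $\{I=i\}$, $Y=Y_i$'' are legitimate. The clean way to do this is to work with the explicit joint distribution of $(I,X,Y)$ — discrete in the coordinate $I$, with the convolution structure in $Y$ — and to disintegrate the joint law along $I$; the independence hypotheses guarantee that this joint law factorizes as $w_i$ times the law of $(X_i,Y_i)$ on the $i$-th atom, and both identities then follow by integrating out the appropriate variables.
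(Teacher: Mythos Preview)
Your proposal is correct. Part~(i) is essentially the same as the paper's argument: both rewrite $Y$ as a mixture over $I$ and use independence of $Y_i$ from $I$. Part~(ii) takes a genuinely different route. The paper computes the conditional density $p_{X|Y}$ directly: it writes $p_{X,Y}(x,y)=p_X(x)\,p_E(y-Ax)$ via a change of variables, expands $p_X=\sum_i w_i p_{X_i}$, recognizes each summand as $\frac{w_i p_{Y_i}(y)}{p_Y(y)}\,p_{X_i|Y_i}(x|y)$, and integrates in $x$. You instead condition on the pair $(Y,I)$ via the tower property and recover the posterior weights $\Prob(I=i\mid Y=y)$ by Bayes' rule. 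Your approach has the advantage that it never needs the components $X_i$ to possess Lebesgue densities---only $E$ (and hence each $Y_i$) must---which is precisely the degenerate situation the paper cares about later; the paper's computation formally writes $p_{X_i}(x)$, which is unavailable when $\Sigma_i$ is singular. The paper's route, on the other hand, is entirely explicit and sidesteps the measure-theoretic bookkeeping about regular conditional expectations that you (rightly) flag at the end.
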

\begin{proof} We prove the two parts separately.

{Proof of \ref{i}}. 
    The random variable $Y$ can be written as
    \begin{equation*}
     Y = A \bigg( \sum_{i=1}^L X_i \mathbbm{1}_{ \{ i \} }(I) \bigg) + E = \bigg( \sum_{i=1}^L A X_i \mathbbm{1}_{ \{ i \} }(I)\bigg) + \bigg( \sum_{i=1}^L E \mathbbm{1}_{ \{ i \} }(I) \bigg) = \sum_{i=1}^L Y_i \mathbbm{1}_{ \{ i \} }(I).  
    \end{equation*}
    Since $Y_i = A X_i+E$ and $X_i,E \perp I$ for every $i = 1,...,L$, then $Y_i \perp I$ for every $i = 1,...,L$. Therefore, the density of $Y$ becomes
    \begin{equation*}
        p_Y(y) = \sum_{i=1}^L w_i p_{Y_i}(y),
    \end{equation*}
     recalling that $w_i = \mathbb{P}(I = i)$. \\
{Proof of \ref{ii}.}
    Since $X_i$ and $E$ are independent, their joint density is $p_{X_i,E}(x,\epsilon) = p_{X_i}(x) p_E(\epsilon)$. Then, using the change of variable $\Phi(X_i,E) = (X_i, A X_i + E) = (X_i, Y_i)$, we have
    \[
    \begin{split}
        p_{X_i,Y_i}(x,y) & = p_{\Phi(X_i,E)}(\Phi(x,\epsilon)) \\ & = p_{X_i,E}(x,\epsilon) |J_{\Phi^{-1}}| \\ & = p_{X_i,E}(x,\epsilon) \\ & = p_{X_i}(x) p_E(y-A x),
    \end{split}
    \]
    since $\Phi^{-1}(X_i,Y_i) = (X_i, Y_i - A X_i)$ and 
    \[
    J_{\Phi^{-1}} = \begin{bmatrix} I & \mathbf{0} \\ -A & I \end{bmatrix}.
    \]
    The same argument holds using $X$ instead of $X_i$, then
    \begin{equation*}
    p_{X,Y}(x,y) = p_X(x) p_E(y-A x).
    \end{equation*}
    Moreover, since $X_i \perp I$ for every $i = 1,...,L$,
    $p_X(x) = \sum_{i=1}^L w_i p_{X_i}(x)$ with $w_i = \mathbb{P}(I = i)$.
    Then
    \[
    \begin{split}
        p_{X|Y}(x|y) & = \frac{p_{X,Y}(x,y)}{p_Y(y)} \\ & = \sum_{i=1}^L \frac{w_i p_{X_i}(x) p_E(y-A x)}{p_Y(y)} \\ & = \sum_{i=1}^L \frac{ w_i p_{Y_i}(y)}{p_Y(y)} \frac{p_{X_i}(x) p_E(y-A x)}{p_{Y_i}(y)} \\ & = \sum_{i=1}^L \frac{ w_i p_{Y_i}(y)}{p_Y(y)} \frac{p_{X_i,Y_i}(x,y)}{p_{Y_i}(y)} \\ & = \sum_{i=1}^L \frac{ w_i p_{Y_i}(y)}{p_Y(y)} p_{X_i|Y_i}(x|y).
    \end{split}
    \]
    Integrating in $x$, we obtain the result.
\end{proof}

\begin{lemma}
\label{denoise_mix_gauss}
Suppose that Assumption~\ref{ass:0} holds true. Consider the statistical linear inverse problem \eqref{statIP}, where $x$ is sampled from a Gaussian mixture, the noise $\epsilon$ is sampled from $E\sim \mathcal{N}(\mathbf{0},\Sigma_E)$, independent of $X_i$ and $I$, and $A \in \R^{m \times n}$. Let $Y=A X+E$. Then
    \begin{enumerate}
        \item \label{1} the density of $Y_i$ is 
        \[
        p_{Y_i}(y) = \frac{1}{\sqrt{(2\pi)^n |A \Sigma_i A^T + \Sigma_E|}} \exp{ \Big( -\frac{1}{2} \| (A \Sigma_i A^T + \Sigma_E)^{-\frac{1}{2}} (y-A \mu_i) \|_2^2 \Big) };
        \]
        \item \label{2} and $\E[X_i|Y_i=y] = \mu_i + \Sigma_i A^T (A \Sigma_i A^T + \Sigma_E)^{-1} (y-A \mu_i)$.
    \end{enumerate}
\end{lemma}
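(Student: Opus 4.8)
The plan is to handle the two statements in order, reducing everything to the standard theory of (possibly degenerate) Gaussian vectors and their affine images.

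For part \ref{1}, I would note that $Y_i = A X_i + E$ is an affine image of the pair $(X_i,E)$, which is Gaussian because $X_i$ and $E$ are independent Gaussians. Hence $Y_i$ is Gaussian with mean $\E[A X_i + E] = A\mu_i$ and covariance $\operatorname{Cov}(A X_i + E) = A\Sigma_i A^T + \Sigma_E$, again using independence. The key point is that this covariance is invertible: $A\Sigma_i A^T \succcurlyeq 0$ while $\Sigma_E \succ 0$ by Assumption~\ref{ass:0}, so their sum is positive definite. Therefore $Y_i$ has a genuine (non-degenerate) Lebesgue density, which is exactly the Gaussian density displayed in the statement.

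For part \ref{2}, I would consider the joint vector $(X_i,Y_i)$. Since
\[
(X_i,Y_i) = \begin{bmatrix} I & \mathbf 0 \\ A & I\end{bmatrix}(X_i,E)
\]
is again an affine image of the Gaussian vector $(X_i,E)$, it is jointly Gaussian, with mean $(\mu_i, A\mu_i)$ and block covariance
\[
\begin{bmatrix} \Sigma_i & \Sigma_i A^T \\ A\Sigma_i & A\Sigma_i A^T + \Sigma_E\end{bmatrix},
\]
where the off-diagonal block is $\operatorname{Cov}(X_i, A X_i + E) = \Sigma_i A^T$. I would then invoke the standard formula for the conditional expectation of one block of a jointly Gaussian vector given the other: whenever the conditioning block $Y_i$ has invertible covariance (established in part \ref{1}), one has $\E[X_i \mid Y_i = y] = \mu_i + \operatorname{Cov}(X_i,Y_i)\,\operatorname{Cov}(Y_i)^{-1}(y - A\mu_i)$, which is precisely the claimed expression. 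I would stress that this formula holds even when $\Sigma_i$ itself is singular, only the covariance of $Y_i$ needs to be invertible.

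The one point that deserves care — and the closest thing to an obstacle — is exactly that $\Sigma_i$ may be degenerate, so $X_i$ need not admit a Lebesgue density and the usual ``complete the square in the joint density'' derivation is not directly available. To make this rigorous I would either (i) cite the general Gaussian-conditioning lemma valid for singular covariances, or (ii) reduce to the regular case by writing $X_i = \mu_i + B_i Z$ with $Z \sim \mathcal N(\mathbf 0, I_r)$, $r = \operatorname{rank}\Sigma_i$, and $B_i B_i^T = \Sigma_i$, carrying out the computation for the non-degenerate pair $(Z,Y_i)$ and then transporting the result through $x = \mu_i + B_i z$. Everything else is routine covariance bookkeeping, which I would not belabor.
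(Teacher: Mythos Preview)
Your proposal is correct and follows essentially the same route as the paper: compute the law of $Y_i$ as the affine image of independent Gaussians, then identify the joint distribution of $(X_i,Y_i)$ and apply the standard Gaussian conditioning formula (the paper cites \cite[Theorem~2, Section~13]{shiryaev2016probability} at exactly this point). Your extra care about the possibly degenerate $\Sigma_i$ is a welcome addition that the paper leaves implicit in the citation.
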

\begin{proof} We prove the two parts separately.

{Proof of \ref{1}}:
Since $X_i \perp E$, $X_i \sim \mathcal{N}(\mu_i,\Sigma_i)$ and $E \sim \mathcal{N}(\mathbf{0},\Sigma_E)$, we have
\[
Y_i = A X_i + E \sim \mathcal{N}(A \mu_i,A \Sigma_i A^T+\Sigma_E).
\]
The expression for the density of $Y_i$ immediately follows.

{Proof of \ref{2}}:
Since $Y_i = A X_i + E$ and $X_i \perp E$, we have
\[
\begin{split}
    (X_i,Y_i) & \sim \mathcal{N} \Bigg( \begin{bmatrix} \E[X_i] \\ \E[Y_i] \end{bmatrix} , \begin{bmatrix} \var[X_i] & \cov[X_i,Y_i] \\ \cov[Y_i,X_i] & \var[Y_i] \end{bmatrix} \Bigg)
    \\ & = \mathcal{N} \Bigg( \begin{bmatrix} \mu_i \\ A \mu_i \end{bmatrix} , \begin{bmatrix} \Sigma_i & \Sigma_i A^T \\ A \Sigma_i & A \Sigma_i A^T + \Sigma_E \end{bmatrix} \Bigg).
\end{split}
\]
Then, using \cite[Theorem $2$, Section $13$]{shiryaev2016probability}, the conditional distribution $(X_i|Y_i = y)$ is Gaussian and its expectation is given by
\begin{equation*}
    \E[X_i|Y_i=y] = \mu_i + \Sigma_i A^T (A \Sigma_i A^T+ \Sigma_E)^{-1} (y-A \mu_i).\qedhere
\end{equation*}
\end{proof}

\section{Implementation of the baseline algorithms}
\label{app:algorithms}

We collect here some additional information regarding the baseline techniques presented in Section \ref{sec:comp}. In particular, we specify the numerical algorithms we employ to minimize the involved functionals and provide further implementation details.

\subsection{LASSO}

The minimization of the LASSO functional \eqref{LASSO_func}, introduced in  Section \ref{sec:LASSO}, can be performed employing the Iterative Soft Thresholding Algorithm (ISTA) \cite{daubechies2004iterative,hale2007fixed}, which is a proximal-gradient descent method involving the computation of the proximal operator for the convex and non-smooth term of the functional (the $\ell^1$-norm), and the gradient descent step for the smooth term (the data fidelity). The proximal operator of the $\ell^1$-norm is the soft thresholding operator, from which ISTA takes its name. 
Mathematically, the ($k+1$)-th iteration is
\begin{equation}
\label{ISTA_iter}
    \beta^{k+1} = S_{t \lambda} (\beta^k - t M^T A^T (A M \beta^k - y) ),
\end{equation}
where $t > 0$ is a stepsize and $S_\lambda \colon \R^n \to \R^n$ is the soft thresholding operator defined componentwise as 
\begin{equation*}
   S_{\lambda}(\beta)_i = \max \{|\beta_i|-\lambda,0 \} \hspace{0.1cm} \sign(\beta_i),
\end{equation*}
where $i$ indicates the component.

For the denoising problem ($A = I$), setting $t=1$, algorithm \eqref{ISTA_iter} achieves convergence in a single step. Therefore, the solution $\bar{\beta}$ is 
\begin{equation}
\label{ISTA_denoising}
   \bar{\beta} = S_\lambda (M^T A^T y).
\end{equation}

\subsection{Group LASSO}

The minimization of \eqref{GLASSO_func_gen} can be performed via the iterative algorithm proposed in \cite[Proposition $1$]{yuan2006model}, showing strong connections with the proximal gradient descent method, which reads
\begin{equation*}
\beta^{k+1} = \prox_{t \lambda f} (\beta^k - t \tilde{M}^T A^T (A \tilde{M} \beta^k - y) ),
\end{equation*}
where $f(\beta) = \sum_{i=1}^L  \| \beta_i \|_{K_i}$, $\tilde{M} = [ M_1 | M_2 | ... | M_L ]$, and $t > 0$ is a stepsize. The map $\prox_{t\lambda f}$ satisfies (see \cite[Theorem $6.6$, Chapter $6$]{beck2017first})
\begin{equation*}
    \prox_{t \lambda f} (\beta)= (\prox_{t \lambda f_i}(\beta_i))_{i=1}^L,
\end{equation*}
being $f_i(\beta_i) = \| \beta_i \|_{K_i}$, and the proximal operator of the $\ell^2$ weighted norm is (see \cite[Lemma $6.68$, Chapter $6$]{beck2017first})
\begin{equation*}
 \prox_{t \lambda f_i}(\beta_i) = \begin{cases}
     \beta_i - K_i^T (K_i K_i^T)^{-1} K_i \beta_i & \quad \| (K_i K_i^T)^{-1} K_i \beta_i \|_2 \leq t \lambda \\
     \beta_i - K_i^T (K_i K_i^T + \alpha^* I)^{-1} K_i \beta_i & \quad \| (K_i K_i^T)^{-1} K_i \beta_i \|_2 > t \lambda
 \end{cases},   
\end{equation*}
where $\alpha^*$ is the unique positive root of the non decreasing function $g(\alpha) := \| (K_i K_i^T +  \alpha I)^{-1} K_i \beta_i \|_2^2 - (t \lambda)^2$.

The resulting algorithm is computationally rather expensive: indeed, the iterative computation of the proximal operator, involving the root-finding problem, must be applied on each vector $y_j$ separately.
To ease the computational burden, in our experiments we instead minimize \eqref{GLASSO_func_gen} through the ADAM optimization scheme implemented in the \textit{pytorch} library, which can process multiple signals in parallel.

\subsection{IHT}

The iterative algorithm to minimize \eqref{IHT_func} consists in the alternation of a gradient descent step in the direction given by the MSE and a projection onto $\mathcal{S}$, namely the ($k+1$)-th iteration is
\begin{equation}
\label{IHT_iter}
\beta^{k+1} = P_{\mathcal{S}}(\beta^k-t M^TA^T(AM\beta^k - y)),
\end{equation}
where $P_{\mathcal{S}}(\beta)=P_{\mathcal{S}_{\bar i}}(\beta)$, $\bar{i} = \argmin_{i} \{ \| P_{\mathcal{S}_i}(\beta) - x \|_2 \}$, $P_{\mathcal{S}_i}$ is the orthogonal projection onto $\mathcal{S}_i$ and $t > 0$ is a stepsize. The projection onto coordinate hyperplanes is performed by simply setting to $0$ all the non-active coordinates. The overall projection $P_{\mathcal{S}}(\beta)$ is also very efficient, since it consists only of computing $L$ projections onto the different subspaces $\mathcal{S}_i$ and selecting the optimal one. Moreover, if $\mathcal{S}$ is the union of all coordinate hyperplanes of dimension $s$, $P_{\mathcal{S}}(\beta)$ simply reduces to selecting the $s$ largest components of $\beta$.

For the denoising problem ($A = I$) the IHT algorithm \eqref{IHT_iter} with $t = 1$ converges to the solution in a single step. Therefore, using the same definitions as before, the solution is 
\begin{equation*}
    \bar{\beta} = P_{\mathcal{S}_{\bar{i}}}(M^T y).
\end{equation*}

\subsection{Dictionary Learning}

Once a dictionary has been learned through the technique discussed in Section \ref{sec:dict_learn}, we are left with the minimization of the functional $\mathcal{F}$ as in \eqref{DL_func}. We do so by means of a proximal-gradient scheme, whose iterations read as
\begin{equation}
\label{dict_learn_iter}
x^{k+1} = \operatorname{prox}_{\lambda t G}(x^k-t A^T(A x^k - y)),
\end{equation}
where $t > 0$ is a stepsize and  
\[
\begin{aligned}
\operatorname{prox}_{G}(z) &= \argmin_{x \in \operatorname{Im}(D)}\left\{ 
\frac{1}{2}\|x-z\|^2 + \|D^{+} x\|_1 \right\} \\
& = D \argmin_{\beta \in \R^d}\left\{ 
\frac{1}{2}\|D\beta-z\|^2 + \|\beta\|_1 \right\}.    
\end{aligned}
\]

For the denoising problem ($A = I$) the algorithm \eqref{dict_learn_iter} with $t = 1$ converges to the solution $\bar{x}$ in a single step, namely 
\begin{equation*}
    \bar{x} = D \operatorname{prox}_{\lambda G}(y).
\end{equation*}

Notice that, under the injectivity assumption on the matrix $D$, the minimization of $\mathcal{F}$ in \eqref{DL_func} is equivalent to a LASSO problem in synthesis formulation, using $D$ as a synthesis operator. Despite the minimizers of those two problems are the same, the iterates approximating them by proximal-gradient schemes do not coincide. For numerical reasons, we prefer the formulation \eqref{DL_func}, leading to the iterations \eqref{dict_learn_iter}. 
Indeed, the implementation of $\operatorname{prox}_G$ can be done efficiently by means of the same routines employed for dictionary learning. Specifically, we rely on the routines provided in the \textit{scikit-learn} library \cite{scikit-learn}, and compute $\operatorname{prox}_G$ through the \textit{transform} method of the learned dictionary.

Finally, in the case of Group Dictionary Learning \eqref{eq:GDL}, we minimize $\mathcal{F}$ by means of a proximal-gradient scheme as in \eqref{dict_learn_iter}, replacing $\prox_{G}$ by $\prox_{\hat{G}}$, which can be easily computed as\[
\prox_{\hat{G}}(z) = \prox_{G_I}(z),\quad I = \argmin_{i = 1,\ldots,L}\left\{ \frac{1}{2}\| z- \prox_{G_i}(z)\|^2 + G_i\big(\prox_{G_i}(z)\big) \right\}.
\]

\section{Additional numerical results}
\label{app:numerics}

In Figure~\ref{fig:denoising_appendix}, we show a qualitative comparison for the denoising problem described in Section~\ref{sec:denois_comp} between the reconstructions obtained with {our supervised method (with random initialization),} our unsupervised method, IHT with SVD basis, IHT with SVD bases of groups, IHT with known basis, LASSO with SVD basis, Group LASSO with SVD bases and LASSO with known basis.

\begin{figure}
\centering 
    \includegraphics[scale=0.9]{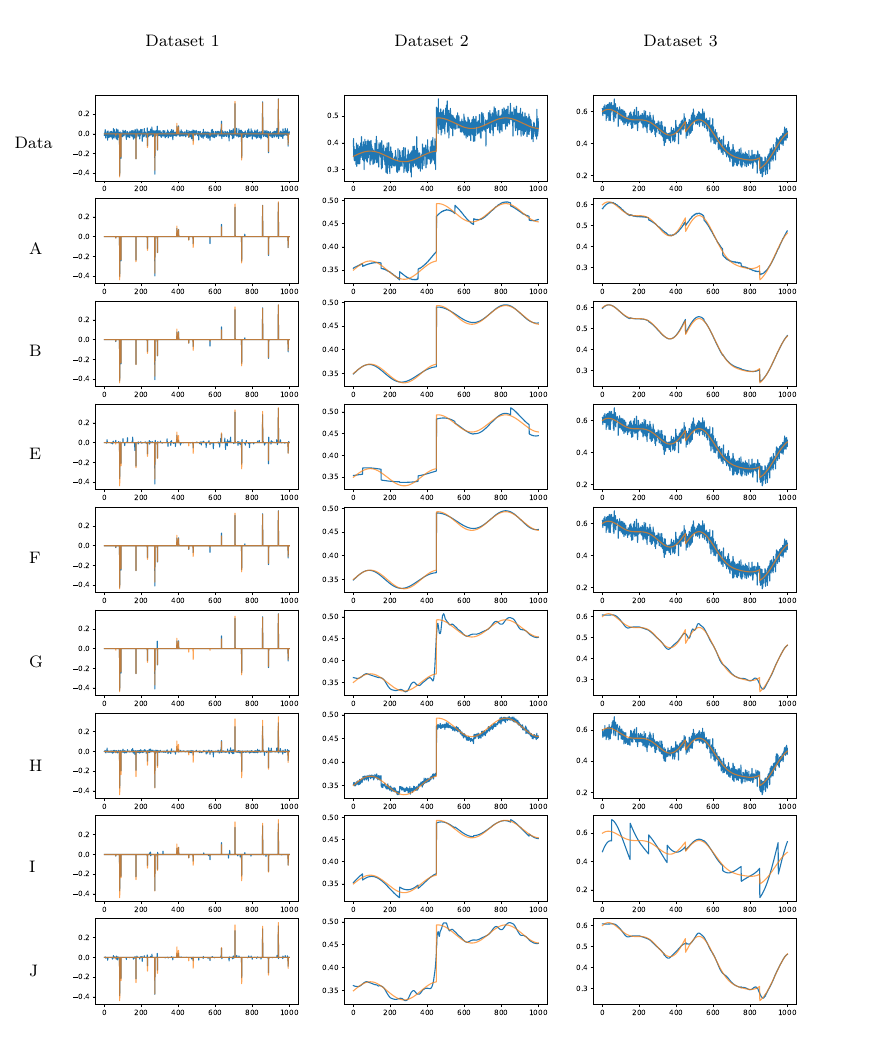}
    \caption{Further qualitative comparisons for the denoising problem.
    In each column we show a signal of the test set from Datasets~\ref{GMM_10}, \ref{S1D} and \ref{FS2D}, respectively. In each row we report the original data in orange and the noisy data, and the reconstructions obtained with the {randomly-initialized supervised \ref{item:sup},} unsupervised \ref{item:unsup}, IHT with SVD basis \ref{item:IHT_SVD}, IHT with SVD bases of groups \ref{item:GIHT_SVD}, IHT with known basis  \ref{item:IHT_known}, LASSO with SVD basis \ref{item:LASSO_SVD}, Group LASSO with SVD bases \ref{item:GLASSO_SVD}, and LASSO with known basis \ref{item:LASSO_known} approach, respectively, in blue.}
\label{fig:denoising_appendix}
\end{figure}

In Figure~\ref{fig:deblurring_appendix}, we show a qualitative comparison for the deblurring problem described in Section~\ref{sec:deblur_comp} between the reconstructions obtained with our unsupervised method, IHT with SVD bases of groups and Group LASSO with SVD bases.

\begin{figure}
\centering 
    \includegraphics[scale=0.9]{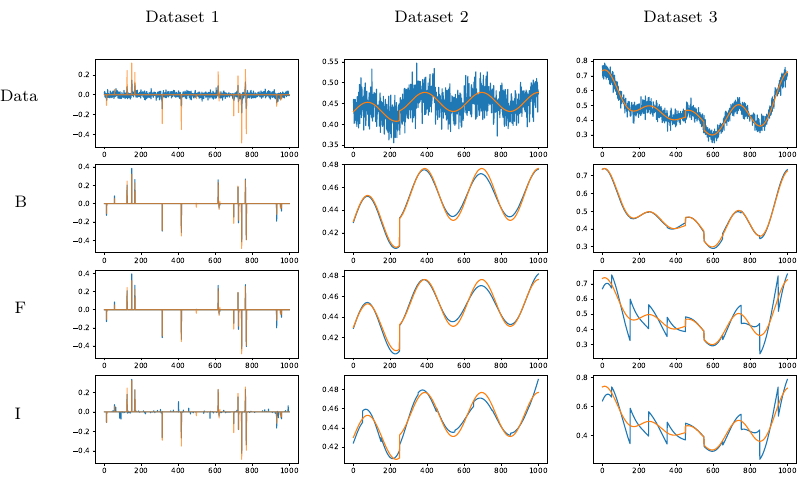}
    \caption{Further qualitative comparisons for the deblurring problem.
    In each column we show a signal of the test set from Datasets~\ref{GMM_10}, \ref{S1D} and \ref{FS2D}, respectively. In each row, we report the original data in orange and the noisy data, and the reconstructions obtained with the unsupervised \ref{item:unsup}, IHT with SVD bases of groups \ref{item:GIHT_SVD}, and Group LASSO with SVD bases \ref{item:GLASSO_SVD} approach, respectively, in blue.}
\label{fig:deblurring_appendix}
\end{figure}

\end{document}